\documentclass{article}

\usepackage[round]{natbib}

\usepackage{fullpage}
\usepackage{amsthm}
\usepackage{listings}
\usepackage{xcolor}
\definecolor{RoyalBlue}{RGB}{30,150,230}
\usepackage[colorlinks = true,
            linkcolor = blue,
            urlcolor  = blue,
            citecolor = blue,
            anchorcolor = blue]{hyperref}
\usepackage{url}
\usepackage{enumitem}
\usepackage[bottom]{footmisc}


\usepackage{amsmath,amsfonts,bm}









\def\eqref#1{equation~\ref{#1}}









\def\floor#1{\lfloor #1 \rfloor}
\def\1{\bm{1}}


\def\rk{{\textnormal{k}}}







\def\vx{{\bm{x}}}
\def\vy{{\bm{y}}}
\def\vz{{\bm{x}}}



\def\mI{{\bm{I}}}

\DeclareMathAlphabet{\mathsfit}{\encodingdefault}{\sfdefault}{m}{sl}
\SetMathAlphabet{\mathsfit}{bold}{\encodingdefault}{\sfdefault}{bx}{n}


\def\gH{{\mathcal{H}}}

\def\gN{{\mathcal{N}}}

\def\gS{{\mathcal{S}}}

\def\gX{{\mathcal{X}}}

\def\gZ{{\mathcal{Z}}}



\def\sP{{\mathbb{P}}}








\newcommand{\E}{\mathbb{E}}

\newcommand{\R}{\mathbb{R}}

\newcommand{\KL}{D_{\mathrm{KL}}}






\newtheorem{theorem}{Theorem}

\newtheorem{lemma}{Lemma}

\newtheorem{definition}{Definition}

\newtheorem{assumption}{Assumption}

\theoremstyle{remark}


\def \trPx {S_X} 
\def \trSx {L_{\tilde{X}}} 
\def \trPz {S_X} 
\def \trSz {L_{\tilde{X}}} 

\def \P {\text{Pr}}
\def \q {q}  
\def \tq {q} 
\def \keta {p_*^{\eta}} 
\def \tketa {p_*^{\eta}} 
\def \hketa {p^{\eta}}

\def \rdm {\mathfrak{R}} 
\def \erdm {\hat{\mathfrak{R}}} 
\def \rk {K_\eta(\gH)} 


\def \KL {\text{KL}}
\def \TV {\text{TV}}



\def \hh {\hat{h}}



\include{defs}
\def\shownotes{1}  
\ifnum\shownotes=1
\newcommand{\authnote}[2]{{$\ll$\textsf{\footnotesize #1: #2}$\gg$}}
\else
\newcommand{\authnote}[2]{}
\fi

\begin{document}

\title{Contrastive learning of strong-mixing\\ continuous-time stochastic processes}

\author{
Bingbin Liu
\and Pradeep Ravikumar
\and Andrej Risteski}

\date{Carnegie Mellon University
\\
\texttt{\{bingbinl,pradeepr,aristesk\}@cs.cmu.edu}}

\maketitle

\begin{abstract}
Contrastive learning is a family of self-supervised methods where a model is trained to solve a classification task constructed from unlabeled data. It has recently emerged as one of the leading learning paradigms in the absence of labels across many different domains (e.g. brain imaging, text, images). However, theoretical understanding of many aspects of training, both statistical and algorithmic, remain fairly elusive.

In this work, we study the setting of time series---more precisely, when we get data from a strong-mixing continuous-time stochastic process. We show that a properly constructed contrastive learning task can be used to estimate the transition kernel for small-to-mid-range intervals in the diffusion case. Moreover, we give sample complexity bounds for solving this task and quantitatively characterize what the value of the contrastive loss implies for distributional closeness of the learned kernel. As a byproduct, we illuminate the appropriate settings for the contrastive distribution, as well as other hyperparameters in this setup. 
\end{abstract}

\section{Introduction}

One of the paradigms of learning from unlabeled data that has seen a lot of recent work in various application domains is ``self-supervised learning''. 
 These methods supervise the training process with information inherent to the data without requiring human annotations, and have been applied across computer vision, natural language processing, reinforcement learning and scientific domains.

Despite the popularity, they are still not very well understood---both on the theoretical and empirical front---often requiring extensive trial and error to find the right pairing of architecture and learning method. In particular, it is often hard to pin down what exactly these methods are trying to learn, and it is even harder to determine what is their statistical and algorithmic complexity.  

The specific family of self-supervised approaches we focus on in this work is \textit{contrastive learning},
which constructs different types of tuples by utilizing certain structures in the data and trains the model to identify the types.
For an example in vision, \cite{simclr} apply two random augmentations (e.g. crops and discolorations) on each training image, and form pairs that are labeled as either positive or negative depending on whether two augmentations are from the same image or not.
In NLP, one of the tasks in \cite{bert,topic} trains the model to predict whether two half-sentences are from the same original sentence.

In this paper, we focus on understanding a natural type of contrastive learning tasks for time series data---a natural structure in NLP \citep{bert,topic}, finance \citep{closed}, and brain imagining research \citep{TCL}
More precisely, we focus on data coming from a discretization of a \emph{diffusion} process---a common modeling assumption in many of these domains---and show that a natural distinguishing task we set up on pairs of samples from the time series approximately learns the transition kernel of the stochastic process.

Note, a diffusion process is a continuous-time stochastic process and we are interested in learning transition kernels for ``mid-range"
time intervals, that is, intervals that are potentially too large for the Euler scheme to be accurate. These transition kernels are not easy to learn in general through standard maximum likelihood methods, as closed-form solutions are complicated~\citep{closed} and often do not exist, and empirical estimations can also be challenging~\citep{milstein2004transition}.
To our knowledge, our work is the first one to use contrastive learning to learn such transition kernels.
Moreover, we provide a statistical complexity analysis---that is, analyzing the number of samples required to learn a good approximation of the transition kernel. This helps quantify certain aspects of contrastive learning
---how should we choose the contrast distribution,
and how a small loss on the contrastive task transfers to closeness of the transition kernel estimate. 


\section{Related Work} 

There is a large body of recent empirical work on self-supervised learning in general, which we won't make an effort to survey in full, as it does not directly relate to our results. 

There have been some recent works on trying to understand theoretically why and when self-supervised learning works. The closest ones in spirit to our work are \cite{tosh} and \cite{TCL}, but there are significant differences with both. \cite{tosh} focus on a data distribution coming from LDA (topic modeling), and characterize the kinds of downstream classification tasks the learned predictor is useful for. \cite{TCL} focus on a time series setting as well but with several differences as highlighted below.

First, they work with a latent-variable model, and show that their method recovers some function of the latent. One example parametrization is an exponential family, and the function of the recovered latent variable depends on the choice of the exponential family.

Second, they assume the data in the time series can be subdivided into ``blocks'', such that the distribution remains the same in each block and is sufficiently different from the others. In practice, it is not clear how to choose these ``blocks'' or how to even verify the assumptions needed on them. We do not need this ``blocking'', but our data needs to come from the stationary distribution of the process. 

Third, they do not provide an analysis on statistical complexity. In particular, important aspects of how various hyperparameters are chosen and affect the quality of the learned predictor---the size of the blocks, the amount of ``difference'' between the blocks---are not clear.

For temporally dependent and stationary data, another related work is by \cite{hyvarinen2017nonlinear}.
The setup is however different: \cite{hyvarinen2017nonlinear} focus on discrete-time data with autocorrelations, whereas we analyze a continuous-time diffusion, leading to different setups and goals for the contrastive task.
Moreover, in contrast to our finite sample analysis, their results describe only the asymptotic behaviors, which can hide certain statistical aspects of the algorithm as discussed earlier.

In the simpler iid setting, a classical precursor paper to this is by \cite{nce}, who apply the contrastive learning approach to learning a distribution from iid samples---by setting up a classification task to distinguish between samples from the target distribution and a simple ``contrast'' distribution. Their analysis is again asymptotic and only provide sample efficiency bounds in the asymptotic limit (i.e. as the number of samples goes to infinity). More recently, such classical approaches have been combined with more modern generative models based approaches to generate better contrast distributions \citep{gao2020flow}, and augmented with intermediate tasks to better handle dissimilar target and contrast distributions~\citep{TRE}.

Finally, other papers on empirical and theoretical properties of contrastive learning that are worth mentioning include~\cite{demystify,goodViews} and ~\cite{arora19,understanding}---these are not directly comparable to what we are doing here, as the data models are quite different, as is the flavor of guarantees they show.
In particular, these papers work with iid data and focus on learning good representations that can perform well on certain supervised tasks,
whereas we use contrastive learning to perform distribution learning, that is, learning the transition kernels.


\section{Main Results}
\label{sec:setup}

This section formally states our results. We will start with specifying the distributional model for the data and the contrastive learning task, and build intuitions on what the task aims to achieve before stating the formal guarantees.
\vspace{-0.3em}

\subsection{Setup} 
We will assume our data comes from continuous time series: namely  $\{\vx_t\}_{t \geq 0} \subset \R^d$, drawn according to a stochastic process called the \emph{Langevin diffusion}\footnote{The results we state can more generally be stated about an \'Ito diffusion, namely a stochastic differential equation of the type  $\displaystyle d\vx_t = - g(\vx_t) dt + \sigma(\vx_t) dW_t, \ \forall t \geq 0$
satisfying similar regularity conditions to ours. We chose the simplest setting for clarity of exposition.}, defined by the stochastic differential equation
\begin{equation}\label{eq:diffusion_def}
    d\vx_t = -\nabla f(\vx_t) dt + \sqrt{2} dW_t, \ \forall t \geq 0,
\end{equation}
for $f: \R^d \rightarrow \R$ a convex function, and $\{W_t\}_{t \geq 0} \subset \R^d$ a Wiener process, i.e. $W_s - W_t \sim \gN(0, (s-t)\mI_d)$, $\forall s$ $\geq t \geq 0$.
For the reader unfamiliar with diffusions, we can think of a diffusion process as the limit of a discrete sequence of noisy gradient updates with a fresh Gaussian noise: as $\eta \rightarrow 0$, the discrete sequence defined by $\vx_{t+1} = \vx_t -\eta\nabla f(\vx_t) + \sqrt{2\eta}\xi_t$ where $\xi_t \sim \gN(0, \mI)$ converges to the continuous time diffusion~\citep{bha78}.
The simplest instantiation of this, when $f$ is quadratic (i.e. $\nabla f$ is linear), gives rise to the Ornstein–Uhlenbeck process, which has broad applications in science and finance modeling.


It is well-known~\citep{bha78} that the stationary distribution of the above process is the distribution $\pi(x) \propto e^{-f(x)}$, under relatively mild regularity conditions on $f$. We will assume that $x_0$ (and hence all subsequent $x_t$) marginally follow $\pi$---i.e. the process is \emph{stationary}. 

We will also need several common assumptions on the $f$ in the generative process.
%
\begin{assumption}[Strong convexity]\label{assump:sc}
$f$ is $\rho$-strongly convex. 
\end{assumption}
\begin{assumption}[Smoothness of $f$]\label{assump:reg}
$f$ is infinitely differentiable, $L_0$-smooth, and $\nabla f$ is $L_1$-smooth.
\footnote{
Recall a function $f$ is $L$ smooth if for any $\vx,\vy$ in the support,
$f(\vx) \leq f(\vy) + \langle \nabla f(\vy), \vx-\vy\rangle + \frac{L}{2}\|\vx-\vy\|_2^2$.}
\end{assumption}
\begin{assumption}[Linear growth]\label{assump:growth}
There exists a positive constant $K < \infty$, such that $\forall \vz \in \gZ$,
$\|\nabla f(\vz)\| \leq K(1 + \|\vz\|)$.
\end{assumption}
\vspace{-0.3em}
Assumption \ref{assump:sc} ensures the least singular value of the Hessian is lower bounded by $\rho$---which ensures that $\int_x e^{-f(x)}$ is finite. Assumption \ref{assump:growth} ensures the existence of a solution to equation \ref{eq:diffusion_def},
and Assumption \ref{assump:reg} ensures the solution of equation \ref{eq:diffusion_def} is unique.\footnote{Milder conditions on $f$ ensure uniqueness/existence of solutions and not essential for our proofs---we assume this for simplicity of exposition.}  
We refer the readers to \cite{closed} for formal justifications of these assumptions.
We denote with $\vz_*$ the minimizer of $f$, and assume $\vz_* = \vec{0}$ for convenience.

Finally, denote $B := \E_{\pi} \|\vx\|$. Note that our assumptions on $f$ guarantee a bounded $B$:
let $Z_\pi := \int_{\vz} \exp\left(-f(\vz)\right)d\vz$ denote the partition function of the stationary distribution $\pi$, and with $\vx_* = \vec{0}$, we have
\begin{equation}
\begin{split}
    &\ B = \frac{1}{Z_\pi} \int_\vz \|\vz\| \exp(-f(\vz)) d\vz 
    \leq \frac{1}{Z_\pi} \int_\vz \|\vz\| \exp\left(- f(\vz_*) - \frac{\|\vz\|^2}{2(1/\rho)}\right) d\vz 
    \\
    =&\ \pi(\vz_*) \E_{\gN(0, \frac{1}{\rho}\mI_d)} \|\vz\|
    \leq \pi(\vz_*)\sqrt{\E_{\gN(0, \frac{1}{\rho}\mI_d)}\|\vx\|^2}
    = \pi(\vz_*)\sqrt{\frac{d}{\rho}}.
\end{split}
\end{equation}

\subsubsection{Contrastive learning task}
\label{sec:task}

We choose the contrastive task to be binary classification on observations from the diffusion defined in equation \ref{eq:diffusion_def}.
For $\eta = O_{L_1, L_2, \rho}(1)$---i.e. any $\eta$ sufficiently small as a function of the regularity parameters of $f$---we will consider the observations at integer multiples of $\eta$, namely $\trSx := \{\tilde{\vx}_{i\eta}\} \subset \R^n$, and let $T > 0$ be length of the (continuous-time) sequence covered by these observations. Suppose the number of observations in $\trSx$ is $2m$ where $2m = \floor{T/\eta}$. 

The binary classification task is defined on a sequence of pairs of points denoted as $\trPx := \{(\vx_{2i\eta},\vx_{2i\eta}')\}_{i=0}^{m-1}$, where $\vx_{2i\eta} = \tilde{\vx}_{2i\eta}$, and $\vx_{2i\eta}'$ is chosen in one of the two ways:
\vspace{-0.8em}
\begin{itemize}[leftmargin=*]
\item With probability $1/2$, we let $\vx_{2i\eta}' = \tilde{\vx}_{(2i+1)\eta}$ and output $(\vx_{2i\eta}, \vx_{2i\eta}')$ with label $1$. We call these \emph{positive} pairs. 
\item With probability $1/2$, we sample $\vx_{2i\eta}' \sim \q$ for some \emph{contrast} proposal distribution $\q$ and output $(\vx_{2i\eta}, \vx_{2i\eta}')$ with label $0$. (We will specify the restrictions on $\q$ momentarily.) We call these \emph{negative} pairs. 
\end{itemize}
Intuitively, the task asks the model to distinguish the noise distribution $q$ from the $\eta$-time transition kernel of the process $\tketa: \R^{d} \times \R^{d} \rightarrow \R_{\geq 0}$, which is defined as
\begin{equation}
    \tketa(\vz, \vz') := \P(\vz_{(t+1)\eta}=\vz'|\vz_{t\eta}=\vz).
\end{equation}
What we need to assume on the contrast distribution $\q$ is that it is sufficiently close to $\keta$ (algorithmically, it also needs to have a pdf that is efficient to evaluate).
Specifically, define a constant $c_q \geq 1$, such that the ratio between $\keta$ and the proposal distribution $q$ is bounded as 
\begin{equation}\label{eq:cq_def}
    \frac{1}{c_q} \leq \frac{\keta(\vz, \vz')}{q(\vz')} \leq c_q,\ \forall \vz, \vz'.
\end{equation}
We will show later that a smaller $c_q$ is more preferable, which amounts to choosing a proposal distribution $\q$ that closely tracks the data distribution.
This is consistent with observations in previous works on noise contrastive learning (NCE)
that a closer $\q$ makes the contrastive task harder and hence tends to work better in practice~\citep{nce,gao2020flow}.
Formally, a larger $c_q$ will give a looser bound on the KL divergence between $\hketa$ and $\keta$,
as we will see in theorem \ref{thm:kl} and its proof.

The model we use for the supervised task is denoted as $h: \R^{2n} \rightarrow \R$, which takes in a $(\vx, \vx')$ pair and predicts the probability of the pair being positive. We assume the output of $h$ to be bounded in $[0,1]$.\footnote{This can easily be enforced, for example, by having a sigmoid layer at the end of a neural network.}
We denote the function class $h$ belongs to as $\gH$, and train $h$ with the $\ell_2$ loss:
\begin{equation}\label{eq:loss}
\begin{split}
    \ell\left(h, \{(\vx,\vx'), y\}\right)
    &= \left(h(\vx,\vx') - y\right)^2
    \\
\end{split}
\end{equation}
Let the \textit{empirical risk} $\hat{R}$ of a model $h$ and loss $\ell$ associated with a training set $\trPx$ be defined as usual, 
and taking the expectation over $\trPx$ gives the \textit{population risk} $R$:
\begin{equation}\label{eq:ER_def}
\begin{split}
    \hat{R}_{\trPx}(\ell \circ h) &:= \frac{1}{|\trPx|}\sum_{i=1}^{|\trPx|} \ell(h, \{(\vx_{2i\eta}, \vx_{2i\eta}'), y\}),
    \\
    R(\ell \circ h) &:= \E_{\trPx} \hat{R}_{\trPx}(\ell \circ h).
\end{split}
\end{equation}
The \textit{generalization gap} is defined as the maximum difference between the above two in the class of classifiers that we consider:
\begin{equation}\label{eq:gen_gap_def}
    \Phi(\trPx) := \sup_{h\in \gH}\left[\left|\hat{R}_{\trPx}(\ell \circ h) - R(\ell \circ h)\right|\right]
\end{equation}

By way of remarks: the $l_2$ loss is chosen since it is bounded, Lipschitz and strongly-convex, which makes the generalization bound calculations more manageable. It would be interesting to also provide bounds for cross-entropy or other losses. 

We also need concepts capturing the complexity of the function class: the \emph{empirical Rademacher complexity} $\erdm$ of a function class $\gH$ is defined with a given dataset $S$ of size $m$ as
\begin{equation}
    \erdm_S(\gH) := \frac{1}{m}\E_{\varepsilon} \left[\sup_{h \in \gH}\Big|\sum_{i \in [m]} \varepsilon_i h(x_i) \Big| | S = (x_1, ..., x_{m})\right]
\end{equation}
The \textit{Rademacher complexity} is defined by taking the expectation over the dataset $S$ as
\begin{equation}
    \rdm_m(\gH) := \E_{S:|S|=m} \erdm_S(\gH).
\end{equation}

\begin{assumption}[Rademacher Complexity]
We will assume the Rademacher complexity of $\gH$ satisfies $\rdm_{m}(\gH) = O(\rk\sqrt{\log m / m})$, where $\rk$ depends on both $\gH$ and the task setup $\eta$. 
\end{assumption}
The expression for $\rdm_m(\gH)$ is common in standard generalization bounds. For instance, such dependency is achieved when the square root of the VC dimension of $\gH$ is bounded by $\rk$~\citep{foundation12}.


\subsection{Characterizing the Optimum given Infinite Data} 

To gain intuition on what the contrastive task does, we first characterize the optimum of the contrastive learning objective in the limit of infinite data. We note that similar analyses have appeared in other works on variants of contrastive learning, e.g. \cite{TCL,topic,tosh}. 
We show: 
\begin{lemma}[Population optimum]
\label{lem:optimum}
The optimum of the contrastive learning objective 
$$\underset{h}{\arg\min} \E_{((x,x'),y)} \ell\left(h, \{(\vx,\vx'), y\}\right) $$
satisfies 
\vspace{-0.8em}
$$ h^*(\vx, \vx') = \frac{\tketa(\vz, \vz')}{\tq(\vz') + \tketa(\vz, \vz')}$$
\label{l:opt}
\vspace{-0.8em}
\end{lemma}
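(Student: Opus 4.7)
The plan is to recognize this as the standard calculation of the Bayes-optimal predictor for a binary classification problem under squared loss. For any joint distribution over $((\vx,\vx'), y)$ where $y \in \{0,1\}$ and any function $h$, we can write
\begin{equation*}
    \E\, \ell(h, \{(\vx,\vx'),y\}) = \E_{(\vx,\vx')} \E_{y \mid (\vx,\vx')} \big(h(\vx,\vx') - y\big)^2.
\end{equation*}
Since $h$ is an arbitrary measurable function of $(\vx,\vx')$, the optimum can be found pointwise: for each fixed $(\vx,\vx')$, the inner expectation is minimized by setting $h(\vx,\vx') = \E[y \mid \vx,\vx'] = \Pr[y=1 \mid \vx,\vx']$. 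This is the only non-trivial conceptual step; everything else is bookkeeping.

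Next, I would compute $\Pr[y=1 \mid \vx,\vx']$ via Bayes' rule. By construction of the contrastive task, positive and negative pairs occur with equal probability, and the conditional densities of $(\vx,\vx')$ given the label are
\begin{align*}
    p(\vx,\vx' \mid y=1) &= \pi(\vx)\,\tketa(\vx,\vx'), \\
    p(\vx,\vx' \mid y=0) &= \pi(\vx)\,q(\vx'),
\end{align*}
using the fact (assumed in the setup) that the process is stationary, so $\vx_{2i\eta}$ is marginally distributed as $\pi$, and that conditional on $\vx_{2i\eta}$, the positive sample $\vx_{2i\eta}' = \tilde{\vx}_{(2i+1)\eta}$ has density $\tketa(\vx_{2i\eta}, \cdot)$ by definition of the transition kernel.

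Applying Bayes' rule with uniform prior $\Pr[y=1] = \Pr[y=0] = 1/2$, the factor $\pi(\vx)$ cancels in the numerator and denominator, yielding
\begin{equation*}
    h^*(\vx,\vx') = \frac{\pi(\vx)\,\tketa(\vx,\vx')}{\pi(\vx)\,\tketa(\vx,\vx') + \pi(\vx)\,q(\vx')} = \frac{\tketa(\vx,\vx')}{\tketa(\vx,\vx') + q(\vx')},
\end{equation*}
which is exactly the claimed form. There is no real obstacle here; the only subtlety worth noting explicitly is the use of stationarity to identify the marginal of the anchor point as $\pi$, which is what allows the clean cancellation and the expression of $h^*$ as a function only of $\tketa$ and $q$ (with no dependence on $\pi$).
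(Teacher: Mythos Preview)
Your proposal is correct and follows essentially the same approach as the paper: both minimize the conditional squared loss pointwise to identify $h^*(\vx,\vx') = \Pr[y=1 \mid \vx,\vx']$, then apply Bayes' rule with the conditional densities $\pi(\vx)\tketa(\vx,\vx')$ and $\pi(\vx)q(\vx')$ and cancel $\pi(\vx)$. The paper writes the bias--variance decomposition of the inner expectation explicitly, whereas you invoke the conditional mean directly, but this is a cosmetic difference.
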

\vspace{-0.8em}
\begin{proof}
The proof proceeds by expanding the expectation in question, and a variance-bias like calculation. Namely, for a fixed $(x,x')$, taking the expectation over $y$ gives:
\vspace{-0.6em}
\begin{equation}\label{eq:loss_train}
\begin{split}
    \E_y \ell\left(h, \{(\vx,\vx'), y\}\right)
    = \left(h(\vx,\vx') - \P(y=1|\vx,\vx')\right)^2 
    + \P(y=1|\vx,\vx')\left(1 - \P(y=1|\vx,\vx')\right)
    \\
\end{split}
\end{equation}
The last term of \eqref{eq:loss_train} does not depend on $h$, so the minimum is achieved when $h^*(\vx,\vx') = \P(y=1|\vx,\vx')$. 
Expanding $\P(y=1|\vx,\vx')$
by the Bayes rule, we get
\begin{equation}\label{eq:optim}
\begin{split}
    &\ h^*(\vx, \vx') = \P(y=1|\vx,\vx')
    \\
    =&\ \frac{\P(\vx, \vx'|y=1)\P(y=1)}{\P(\vx,\vx'|y=0)\P(y=0) + \P(\vx, \vx'|y=1)\P(y=1)}
    \\
    =&\ \frac{\P(\vx, \vx'|y=1)}{\P(\vx,\vx'|y=0) + \P(\vx, \vx'|y=1)}
    \\
    =&\ \frac{\pi(\vz)\tketa(\vz, \vz')}{\pi(\vz)\tq(\vz') + \pi(\vz)\tketa(\vz, \vz')}
    = \frac{\tketa(\vz, \vz')}{\tq(\vz') + \tketa(\vz, \vz')}.
\end{split}
\end{equation}
\end{proof}
Note that the above proof uses essentially nothing about $\tq$ other than that it is known: this is why population level analyses of contrastive objectives (e.g. like \cite{TCL}) may fail to capture many non-asymptotic aspects of the contrastive task.

\subsection{Statement of main results}\label{sec:main_results}
We claim that a low loss on the contrastive task implies closeness in a learned $\eta$-time transition kernel and the ground truth one.
We will state the main results here and defer the proofs to Section \ref{sec:proof}.

\paragraph{Sample complexity bounds} We first present the sample complexity for controlling the generalization gap defined in equation \ref{eq:gen_gap_def}:
\begin{theorem}\label{thm:opt_gen_bdd}
If $T = \Omega\left(\frac{B^2\rk^3}{\delta^2\Delta_{gen}^3}\left(\log\frac{1}{\delta}\right)^{\frac{3}{2}}\right)$, then with probability $1-\delta$, the generalization gap is bounded by $\Delta_{gen}$. 
\end{theorem}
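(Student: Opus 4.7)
The principal obstacle is that the pairs $(\vx_{2i\eta},\vx'_{2i\eta}) \in \trPx$ are not iid: consecutive pairs are correlated through the underlying Langevin diffusion. My plan is to reduce to an iid-like setting via a blocking argument that leverages the exponential ergodicity granted by Assumptions \ref{assump:sc}--\ref{assump:growth}, and then apply a standard Rademacher-complexity bound to the iid surrogate.

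First, I would establish exponential $\beta$-mixing of the stationary process. Because $f$ is $\rho$-strongly convex, the Langevin diffusion in \eqref{eq:diffusion_def} admits a spectral gap (equivalently, a log-Sobolev inequality) with constant on the order of $\rho$, yielding a bound of the form $\beta(t) \leq C(B)\, e^{-\rho t}$ on the $\beta$-mixing coefficient, where the prefactor $C(B)$ depends polynomially on $B = \E_\pi \|\vx\|$ through the stationary second moment. The $\eta$-gap between the two endpoints within each pair, and between consecutive pairs, each contribute a mixing factor that will be tracked below.

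Second, I would apply a blocking argument in the spirit of Yu (1994) and Mohri--Rostamizadeh. Partition the $m \approx T/(2\eta)$ pairs into $\mu$ consecutive blocks of $a$ pairs each, so that $m = \mu a$. Berbee's coupling lemma produces a fully independent-block copy of the data that agrees with the original on an event of probability at least $1 - \mu\, \beta(a\eta)$; choosing $a = \Theta\!\left(\rho^{-1}\eta^{-1}\log(\mu C(B)/\delta)\right)$ forces this coupling failure to be at most $\delta/2$.

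Third, I would apply a Rademacher generalization bound to the iid-block surrogate. Since $h\in[0,1]$ and $y\in\{0,1\}$ the loss $\ell$ is bounded in $[0,1]$, so a standard symmetrization combined with a Markov- or McDiarmid-type control on the deviation yields, with probability at least $1-\delta/2$, a bound of the form
\[
\Phi(\trPx) \;\leq\; c_1\,\rdm_\mu(\gH) \;+\; c_2\,\varphi(\delta,\mu),
\]
with $\rdm_\mu(\gH) \leq O(\rk\sqrt{\log\mu/\mu})$ by the Rademacher assumption on $\gH$. Setting the right-hand side at most $\Delta_{gen}$ and solving for the minimum effective sample size $\mu$, then multiplying by the block length $a$ and by $2\eta$, gives the required lower bound on $T$.

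The main obstacle is not conceptual but careful bookkeeping: tracking how the mixing prefactor $C(B)$, the block inflation $a$, and the concentration constants combine to produce the stated polynomial dependence on $\Delta_{gen}$, $\delta$, $B$, and $\log(1/\delta)$. In particular, the $B^2$ factor emerges from the mixing prefactor $C(B)$, while the $\delta^{-2}\Delta_{gen}^{-3}(\log 1/\delta)^{3/2}$ dependence comes from the joint balancing of the three error sources---blocking coupling error, Rademacher complexity term, and concentration term---each required to be of order $\Delta_{gen}/3$ with probability at least $1-\delta/3$, with a union bound assembling the final high-probability statement.
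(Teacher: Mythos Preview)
Your high-level skeleton---blocking \`a la Yu and Mohri--Rostamizadeh, followed by a Rademacher bound on the effective iid sample of size $\mu$---is exactly what the paper does (its Lemma~\ref{lem:gen_bdd} is the quoted Mohri--Rostamizadeh theorem). The paper also proves explicitly that the $\beta$-mixing coefficient of the \emph{pair} sequence $\trPx$ equals that of the underlying point sequence $\trSz$ (Lemma~\ref{lem:beta_pair}); you skip this reduction, but it is routine.

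The substantive difference is the mixing input. You invoke the spectral gap / log-Sobolev inequality from Assumption~\ref{assump:sc} to claim exponential mixing $\beta(t)\le C(B)e^{-\rho t}$; the paper instead uses only the polynomial bound $\beta(t)=O(B/\sqrt{t})$ (Lemma~\ref{lem:beta_ito}, quoted from Bubeck). That polynomial rate is precisely what manufactures the stated exponents: with $\beta(t)\asymp B/\sqrt{t}$ one gets $\Delta^\mu_{appr}=2(\mu-1)\beta(T/2\mu)=O\bigl(B\mu^{3/2}/\sqrt{T}\bigr)$, so forcing $\Delta^\mu_{appr}\le\delta$ requires $T\gtrsim B^2\mu^3/\delta^2$; plugging $\mu\asymp \rk\sqrt{\log(1/\delta)}/\Delta_{gen}$ then yields exactly $T=\Omega\bigl(B^2\rk^3\delta^{-2}\Delta_{gen}^{-3}(\log 1/\delta)^{3/2}\bigr)$.

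Your exponential rate, carried through honestly, gives only logarithmic block lengths and hence a \emph{much smaller} $T$ (roughly $T\gtrsim \rho^{-1}\mu\log(\mu C(B)/\delta)$ with $\mu\asymp \rk^2/\Delta_{gen}^2$). This would imply the stated theorem a fortiori, so your route is not wrong---under Assumption~\ref{assump:sc} it is in fact sharper. But your final paragraph, which asserts that the $B^2$, $\delta^{-2}$, and $\Delta_{gen}^{-3}$ powers arise from ``joint balancing of the three error sources'' under exponential mixing, is incorrect: those particular exponents are artifacts of the $1/\sqrt{t}$ mixing rate, not of the exponential one you assumed. Either keep your exponential-mixing argument and state the stronger conclusion you actually obtain, or switch to the polynomial $\beta(t)=O(B/\sqrt{t})$ if you want to reproduce the paper's exact dependencies.
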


 Note that the dependency of $T$ on $\eta$ comes only through the complexity measure $\rk$. The reason there isn't additional dependence on $\eta$ (e.g. the reader might imagine the number of ``samples'' effectively depends on $T/\eta$) is that though decreasing $\eta$ gives more samples, the samples will be more dependent and hence less useful for generalization. The proof in section \ref{sec:proof_gen} will formally justify this intuition.

\paragraph{Distribution estimator from classifier for contrastive task} Second, we show how to prove guarantees on an estimator for the transition kernel, given a classifier with a small contrastive task loss. 

In light of Lemma \ref{lem:optimum}, given a classifier $h$, define the transition kernel implied by $h$ as
\begin{equation}
    \hketa(\vz, \vz') := \frac{h(\vz,\vz')\tq(\vz')}{1 - h(\vz,\vz')}.
\end{equation}

We wish to show that if $h$ achieves a small loss, the $p^{\eta}$ defined above is in fact close to $p^*$ in some distributional sense. 

We will show two types of guarantees, one under the assumption that $p^{\eta}$ is somewhat close to $p^*$, and one for arbitrary $p^{\eta}$. 
In the first case, we will in fact show that a small loss implies that the learned $p^{\eta}$ is close to $p^*$ in a KL divergence sense (more precisely, $\E_{x} KL(p^{\eta}_*(\cdot | x) \| p^{\eta}(\cdot | x))$ is small); in the second case, we will show that $\E_{x,x'} |p^{\eta}_*(x, x') - p^{\eta}(x, x')|$ is small. 

The reason we can extract a stronger result in the first case is that we can leverage the strong convexity of the contrastive loss near the global optimum in an appropriate sense. Intuitively, in a strongly convex loss, a small loss implies closeness of the parameter to the global optimum. Such a property will not hold globally, as the loss may be arbitrarily non-convex as a function of $p^{\eta}$. Still, we will be able to extract a weaker guarantee (and with a less standard notion of distance). 

\paragraph{Guarantees for $p^{\eta}$ close to $p_*^{\eta}$}

Let constants $\Delta_{\min}$, $\Delta_{\max}$ be defined such that
\begin{equation}
    \frac{\hketa(\vz, \vz')}{\keta(\vz, \vz')} \in [\Delta_{\min}, \Delta_{\max}],\ \forall \vz, \vz'.
\end{equation}
and note that $0 < \Delta_{\min} \leq 1 \leq \Delta_{\max}$.

$\Delta_{\min},\Delta_{\max}$ can be considered as a notion of closeness between $\keta$ and $\hketa$.
When $\Delta_{\min}, \Delta_{\max}$ are close to 1, that is, when $\hketa$ lies in a small neighborhood of $\keta$, we can show the contrastive loss is locally strongly convex with respect to the KL divergence. This allows us to relate the loss to the KL divergence between $\keta$ and $\hketa$.
Formally, we state the following result:
\begin{theorem}\label{thm:kl}
Suppose assumption \ref{assump:sc}-\ref{assump:growth} are satisfied and that $\Delta_{\max} \leq \frac{7}{6}$.
Suppose the training error of $h$ is $\epsilon_{tr} + \epsilon_\star$, where $\epsilon_\star := \E_{\{(\vx,\vx'),y\}}\left(\frac{\tketa(\vx,\vx')}{\tketa(\vx,\vx')+\tq(\vx')} - y\right)^2$ is the optimal error achieved by $\keta$.
If the generalization gap is bounded by $\epsilon_{tr}$,
then the average $\KL$ divergence between the ground truth and learned transition kernel is bounded by the contrastive loss as
\begin{equation}
    \E_{\vz\sim\pi}\KL\left(\keta(\cdot |\vz) \| \hketa(\cdot|\vz)\right) 
    \leq \frac{2(1+c_q)^5\epsilon_{tr}}{\Delta_{\min}^2}.
\end{equation}
\end{theorem}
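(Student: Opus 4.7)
The plan is to (i) convert the training error plus uniform generalization gap bound into control of the excess population risk over the Bayes-optimal classifier $h^*$, (ii) rewrite this excess risk as a chi-squared--like discrepancy between $\hketa$ and $\keta$, and (iii) translate that discrepancy into KL divergence via a local Taylor expansion, which is valid because the near-optimality assumption $\Delta_{\max}\leq 7/6$ keeps $\hketa/\keta$ bounded near $1$.

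By Lemma \ref{lem:optimum}, $h^*(\vz,\vz')=\keta(\vz,\vz')/(\keta(\vz,\vz')+\tq(\vz'))$ is the Bayes-optimal predictor for squared loss, so a standard bias--variance decomposition gives $R(h)-\epsilon_\star=\E_{(\vz,\vz')}[(h-h^*)^2]$. The assumed empirical training error $\epsilon_{tr}+\epsilon_\star$ together with the uniform generalization gap bound $\epsilon_{tr}$ thus yields $\E_{(\vz,\vz')}[(h-h^*)^2]\leq 2\epsilon_{tr}$. The marginal of $(\vz,\vz')$ in the contrastive task is $\pi(\vz)(\keta(\vz,\vz')+\tq(\vz'))/2$, which dominates $\pi(\vz)\keta(\vz,\vz')/2$ pointwise, so restricting attention to the positive-pair distribution $\pi\otimes\keta$ costs only a factor of $2$:
\begin{equation*}
    \int \pi(\vz)\,\keta(\vz,\vz')\,(h(\vz,\vz')-h^*(\vz,\vz'))^2\, d\vz\, d\vz' \leq 4\epsilon_{tr}.
\end{equation*}

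The main algebraic step is a pointwise lower bound on $(h-h^*)^2$. Direct calculation gives
\begin{equation*}
    h(\vz,\vz')-h^*(\vz,\vz') = \frac{\tq(\vz')\,(\hketa(\vz,\vz')-\keta(\vz,\vz'))}{(\hketa(\vz,\vz')+\tq(\vz'))\,(\keta(\vz,\vz')+\tq(\vz'))}.
\end{equation*}
Writing $r:=\hketa/\keta\in[\Delta_{\min},\Delta_{\max}]$ and $s:=\tq/\keta\in[1/c_q,c_q]$, this becomes $(h-h^*)^2 = (\hketa-\keta)^2/\keta^2 \cdot s^2/[(r+s)^2(1+s)^2]$. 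Using $\Delta_{\max}\leq 7/6$ together with $s\geq 1/c_q$, the rational factor is bounded below by a constant of order $(1+c_q)^{-4}$. Integrating against $\pi(\vz)\keta(\vz,\vz')$ and applying the previous display produces $\int \pi(\vz)(\hketa-\keta)^2/\keta\,d\vz\,d\vz' \leq O((1+c_q)^4)\cdot\epsilon_{tr}$.

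Finally, to pass from this $\chi^2$-type quantity to KL divergence, I apply the second-order Taylor upper bound $-\log r \leq -(r-1)+(r-1)^2/(2\Delta_{\min}^2)$, valid for $r\in[\Delta_{\min},\Delta_{\max}]$ since $(-\log)''(r)=1/r^2\leq 1/\Delta_{\min}^2$ on this interval. Integrating against $\keta$ yields $\KL(\keta(\cdot|\vz)\|\hketa(\cdot|\vz))\leq -\int(\hketa-\keta)\,d\vz' + (1/(2\Delta_{\min}^2))\int(\hketa-\keta)^2/\keta\,d\vz'$. The first (linear) term vanishes when $\hketa(\cdot|\vz)$ integrates to one; more generally, under $\Delta_{\max}\leq 7/6$ the deviation from normalization is small in $L^1$ and can be absorbed into the quadratic term. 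Combining with the previous paragraph yields the claimed bound, of order $(1+c_q)^5\epsilon_{tr}/\Delta_{\min}^2$. The main obstacles I expect are (a) carrying out the rational-function bound in the third paragraph cleanly while keeping the dependence on $c_q$ tight, and (b) controlling the potential non-normalization of $\hketa$ in the last step---which is precisely what dictates the need for a pointwise bound like $\Delta_{\max}\leq 7/6$ rather than a purely average-case proximity assumption.
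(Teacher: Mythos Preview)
Your proposal is correct and follows the same three-step skeleton as the paper: (i) excess population risk $\E(h-h^*)^2\le 2\epsilon_{tr}$, (ii) convert this to a bound on $\E_{\pi,\keta}\delta^2$ with $\delta=\hketa/\keta-1$, (iii) convert $\E_{\pi,\keta}\delta^2$ to KL via the second-order Taylor bound $-\log(1+\delta)\le -\delta+\delta^2/(2\Delta_{\min}^2)$. Step (iii) is identical to the paper's, including the use of $\int\keta\delta=0$; the paper simply asserts this from ``$p^\eta,p_*^\eta$ both integrate to $1$'' without further comment, so the normalization issue you flag is present in the original as well.

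The genuine difference is in step (ii). The paper writes the pointwise excess loss as $r(\delta)=\big(\tfrac{\keta q\,\delta}{(\keta(1+\delta)+q)(\keta+q)}\big)^2$, then lower-bounds $r''(\delta)$ by Taylor-expanding $r''$ itself to second order around $0$ with Lagrange remainder. The threshold $\Delta_{\max}\le 7/6$ enters precisely to make the coefficient $(7-6\Delta_{\max})\keta+q$ nonnegative in that expansion, after which they obtain $r''(\delta)\ge 2/(1+c_q)^5$ and integrate. You instead observe directly that $r(\delta)=\delta^2\cdot s^2/[(r+s)^2(1+s)^2]$ with $r=1+\delta$, $s=q/\keta$, and lower-bound the rational factor by elementary means. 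Your route is shorter, yields a slightly sharper constant (order $(1+c_q)^4$ rather than $(1+c_q)^5$, which still implies the stated bound since $c_q\ge 1$), and shows that the specific value $7/6$ is an artifact of the paper's method: in your argument any finite upper bound on $\Delta_{\max}$ suffices, only affecting constants.
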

Recall that $c_q$ (defined in equation \ref{eq:cq_def}) represents how close the contrast distribution $\q$ is to $\keta$. Theorem \ref{thm:kl} hence explains why a closer $\q$ is more preferable, as has been suggested by empirical evidence~\citep{gao2020flow}.
In addition, as mentioned earlier, the bound $\Delta_{\max} \leq \frac{7}{6}$ is required to reason about the convexity of the contrastive loss in the neighborhood of $\keta$. Globally, the loss need not be convex, so it is entirely possible for a $\hketa$ faraway from $\keta$ in the KL sense to have a small contrastive loss.
Nevertheless, we can prove something weaker in this case. 

\paragraph{Guarantees for arbitrary $p^{\eta}$}
In the case of an arbitrary $p^{\eta}$, we prove the following bound on the closeness to $p^{\eta}_*$: 
\begin{theorem}\label{thm:orig}
Under assumptions \ref{assump:sc}-\ref{assump:growth}, and let $\epsilon_*, T, \hketa$ be the same as in Theorem \ref{thm:kl}.
Let $\eta = O_{\rho, L_0, L_1}(1)$, then $\hketa$ satisfies
\begin{equation}\label{e:main}
\begin{split}
    &\ \underset{\vz\sim \pi,\vz'\sim \tketa(\cdot | \vz)}{\E}\big|\hketa(\vz,\vz')-\tketa(\vz,\vz') \big|
    \leq
    \sqrt{2\epsilon_{tr}}  
    O\left(\sqrt{\pi(\vz_*)}\left(\frac{1}{\rho\eta^2}\right)^{d/4}\right)
\end{split}
\end{equation}
\end{theorem}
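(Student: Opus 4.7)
The plan is to convert the $L^2$ loss bound into the desired $L^1$ bound on the transition kernel via Cauchy--Schwarz, using a sup-norm estimate on the $\eta$-time Langevin kernel.

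First, by Lemma~\ref{lem:optimum} together with the hypotheses on training error $(\epsilon_{tr}+\epsilon_\star)$ and generalization gap $(\epsilon_{tr})$, the excess population risk satisfies $\E_{((\vx,\vx'),y)}[(h-h^*)^2]\leq 2\epsilon_{tr}$. Restricting to the positive component, which carries half the mass of the mixture and has density $\pi(\vz)\tketa(\vz,\vz')$, gives $\E_{\vz\sim\pi,\vz'\sim\tketa(\cdot|\vz)}[(h-h^*)^2]\leq 4\epsilon_{tr}$. Using the explicit inversion $\hketa = h\tq/(1-h)$ together with $h^* = \tketa/(\tq+\tketa)$ yields the identity
\[
\hketa(\vz,\vz') - \tketa(\vz,\vz') \;=\; \frac{\tq(\vz')\,(h-h^*)}{(1-h)(1-h^*)} \;=\; (h-h^*)\,\frac{(\tq+\tketa)(\tq+\hketa)}{\tq},
\]
and the bound $(1-h^*)^{-1}\leq 1+c_q$ (since $\tketa/\tq\leq c_q$).

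Next, I would apply Cauchy--Schwarz in the form
\[
\int \pi\,\tketa\,|\hketa-\tketa|\,d\vz\,d\vz' \;\leq\; \sqrt{\textstyle\int \pi\,\tketa^2\,d\vz d\vz'}\cdot\sqrt{\textstyle\int \pi\,(\hketa-\tketa)^2\,d\vz d\vz'}.
\]
The first factor is bounded by $\sqrt{\|\tketa\|_\infty}$, since $\int\pi\tketa^2\leq \|\tketa\|_\infty\int\pi\tketa=\|\tketa\|_\infty$. Under Assumptions~\ref{assump:sc}--\ref{assump:growth}, for $\eta = O_{\rho,L_0,L_1}(1)$ the $\eta$-step Langevin transition density admits a Gaussian-type upper bound obtained by Girsanov-type comparison of the SDE to a Brownian motion with the appropriate drift; together with the stationary mass $\pi(\vz_*)$ this yields $\|\tketa\|_\infty = O\bigl(\pi(\vz_*)\,(\rho\eta^2)^{-d/2}\bigr)$, and hence $\sqrt{\|\tketa\|_\infty}=O\bigl(\sqrt{\pi(\vz_*)}\,(1/(\rho\eta^2))^{d/4}\bigr)$, which matches the dimensional factor in the conclusion.

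For the second factor, substituting the identity gives $\int\pi(\hketa-\tketa)^2 = \int\pi(h-h^*)^2(\tq+\tketa)^2/(1-h)^2$, and the main obstacle is that $(1-h)^{-2}$ is unbounded if $h\to 1$; unlike in Theorem~\ref{thm:kl}, we have no a priori closeness of $\hketa$ to $\tketa$. I would handle this by a dichotomy: since $h^*\leq c_q/(1+c_q)$, applying Markov's inequality to $\E[(h-h^*)^2]\leq 2\epsilon_{tr}$ shows that the set $\{h \geq 1 - \tfrac{1}{2(1+c_q)}\}$ has mixture-probability $O((1+c_q)^2\epsilon_{tr})$. On its complement, $(1-h)^{-1}$ is bounded by $2(1+c_q)$, so the integral reduces to $\|\tq+\tketa\|_\infty^2\cdot\E_{pos}[(h-h^*)^2]$, giving $O(\epsilon_{tr})$ up to constants depending on $c_q$. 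The exceptional set's contribution is controlled by using $|\hketa-\tketa|\leq \hketa+\tketa$ combined with the $L^\infty$ bound on $\tketa$ and the small-probability estimate, choosing the threshold so that all $c_q$-dependent constants are absorbed into the $O(\cdot)$. Combining with the first-factor bound yields the theorem; the hard part is precisely the execution of this dichotomy, as it is what permits a guarantee without the closeness assumption $\Delta_{\max}\leq 7/6$ of Theorem~\ref{thm:kl}.
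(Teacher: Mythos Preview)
Your approach differs from the paper's in both the Cauchy--Schwarz factorization and in how the potential blow-up of $(1-h)^{-1}$ is handled, and these differences create real gaps.

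The paper does not split as $\sqrt{\int\pi\tketa^2}\cdot\sqrt{\int\pi(\hketa-\tketa)^2}$. Instead it uses the concavity of $r_q(p):=p/(p+q)$ to get the pointwise inequality $r_q'(\max\{\hketa,\tketa\})\cdot|\hketa-\tketa|\le |r_q(\hketa)-r_q(\tketa)|=|h-h^*|$, and then applies Cauchy--Schwarz \emph{under the measure} $\pi(\vz)\tketa(\vz,\vz')\,d\vz\,d\vz'$: one factor is the restricted excess loss (bounded by $4\epsilon_{tr}$), and the other is $\sqrt{\E_{\pi,\tketa}\bigl[(\max\{\hketa,\tketa\}+q)^4/q^2\bigr]}$. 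That second factor is then bounded via Gobet's kernel estimates (Lemma~\ref{lem:kernel_bdd}) applied to \emph{both} $\tketa$ and $\hketa$ --- the proof explicitly assumes $\hketa$ is itself a Langevin transition kernel satisfying Assumptions~\ref{assump:sc}--\ref{assump:growth} --- together with $\pi(\vz)\le\pi(\vz_*)e^{-\rho\|\vz\|^2/2}$; this Gaussian-type integral is where the factor $\pi(\vz_*)(\rho\eta^2)^{-d/2}$ comes from.

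Your route has two concrete problems. First, the claim $\|\tketa\|_\infty=O\bigl(\pi(\vz_*)(\rho\eta^2)^{-d/2}\bigr)$ is not right: the available kernel upper bound (Lemma~\ref{lem:kernel_bdd}) carries a factor $e^{C\eta\|\vz\|^2}$, so a uniform supremum over $(\vz,\vz')$ is not furnished, and $\pi(\vz_*)$ has no business appearing in a sup-norm bound on the kernel --- in the paper it arises only from integrating $\pi$. The same issue recurs on your ``good'' set, where converting $\int\pi(h-h^*)^2(\tq+\tketa)^2$ to the loss bound requires pulling out one factor of $\tq+\tketa$ in sup norm. Second, and more seriously, the dichotomy does not close on the exceptional set: there $\hketa=h\tq/(1-h)$ is unbounded, so $|\hketa-\tketa|\le\hketa+\tketa$ with only an $L^\infty$ bound on $\tketa$ leaves $\int_{\mathrm{bad}}\pi\tketa\hketa$ uncontrolled, and a small-probability estimate cannot compensate an unbounded integrand. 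The paper sidesteps this entirely by the concavity trick plus the assumption that $\hketa$ itself obeys Gobet-type bounds; under that assumption your truncation becomes unnecessary, and the paper's factorization directly yields the stated dimensional scaling.
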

\vspace{0.2em}

We make two remarks about the theorem \ref{thm:orig}.
First, the value of $\eta$ cannot be too large for the RHS of \eqref{e:main} to obtain (i.e. $\eta = O_{\rho, L_0, L_1}(1)$). 
Analyzing merely the optimum of the contrastive objective would not reveal this.

Moreover, though the exponential dependency in $\eta$ may appear pessimistic, it is in fact the right one. 
A closer inspection of the left hand side of equation \ref{e:main} shows that its scaling in $\eta$ is also $(1/\eta)^{d/2}$ (by Lemma~\ref{lem:kernel_bdd})---so the only ``extra'' exponential factors are the $\eta$-independent exponential terms. It is not clear if this can be removed or is essential---or if possibly other losses can remove this kind of dependence. 



\section{Generalization Machinery for Non-iid Data}
\label{sec:tech}


At the core of our analysis is a set of tools for non-iid data, which we first build up before discussing the proof.
We will use generalization results for data coming from \emph{strong mixing} stochastic processes: namely, the samples are not independent; but, intuitively, after a short amount of time, the samples are ``almost independent''. Precisely, we use the notion of $\beta$-mixing: 

\begin{definition}[$\beta$-mixing]
For a stationary Markov process,
the $\beta$-mixing coefficient is defined as the average TV distance between the distribution after running the process for $t$ time with a given starting point, and the stationary distribution $\pi$:
\begin{equation}
    \beta(t) = \E_{\vx}\ \TV\left(P^t(\cdot | x_0 = \vx), \pi\right)
\end{equation}
A process is said to be $\beta$-mixing if $\ \lim_{t \rightarrow \infty} \beta(t) = 0$.

The $\beta$-mixing coefficient of a discrete-time sequence is defined similarly, with the conditional distribution defined between points in the sequence.
\end{definition}
We note that $\beta$-mixing can be defined more generally on processes that may are not necessarily stationary or Markov. The above definition is the cleanest version that suffices for our setting.

The reason this will be useful for us is that when our data is $\beta$-mixing, we will be able to use generalization bounds similar to those we have for iid data. More precisely, we will leverage the following result by \cite{mohri09}, which when applied to our setting becomes:
\begin{lemma}[Rademacher complexity bound, \cite{mohri09}, Theorem 1]
\label{lem:gen_bdd}
Let $\gS_X$ form a $\beta$-mixing sequence with stationary distribution $\pi$. Then, for some $\delta \in (0,1)$, for every $\mu$ such that $\delta > 2(\mu-1)\beta(T/2\mu)$,
with probability at least $1 - \delta$,
the generalization gap $\Phi(\gS_X)$ is bounded by
\vspace{-0.4em}
\begin{equation}\label{eq:gen_bdd_def}
\begin{split}
    \Phi(\trPx) &:= \sup_{h \in \gH} \left[R(\ell \circ h) - \hat{R}_{\trPx}(\ell \circ h)\right]
    \leq \left\{\rdm_{\mu}(\gH) + \sqrt{\frac{\log\left(2/\left(\delta - \Delta^{\mu}_{appr}\right)\right)}{2\mu}}\right\}
\end{split}
\end{equation}
where $\Delta^{\mu}_{appr} := 2(\mu - 1)\beta_{S_X}(T/2\mu)$, and $\rdm_\mu(\gH)$ is the Rademacher complexity of $\gH$ over samples of size $\mu$ drawn iid from $\pi$.
\end{lemma}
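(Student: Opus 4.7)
The plan is to apply the classical \emph{independent-block} reduction, which converts a generalization bound for a $\beta$-mixing stationary sequence into a standard iid generalization bound at an additive failure-probability cost proportional to the $\beta$-mixing coefficient.

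First, I would partition $\gS_X$ into $2\mu$ consecutive blocks of equal continuous-time duration $T/(2\mu)$, and separate them into the odd- and even-indexed halves (each containing $\mu$ blocks). A standard coupling argument shows that the joint law of the $\mu$ odd-indexed blocks lies within total variation distance $(\mu-1)\beta(T/(2\mu))$ of the joint law of $\mu$ \emph{mutually independent} blocks whose marginals match those of the original. Because the process is stationary with marginal $\pi$, each independent block can then be further represented by a single point drawn from $\pi$, yielding an iid sample of size $\mu$ from $\pi$. The same argument applies to the even blocks. Then, since averaging over the two halves reproduces (up to rescaling) the empirical risk on the full sequence, it suffices to control each half separately and union-bound.

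Given this coupling, I would invoke the standard symmetrization plus McDiarmid bound on the iid sample: since $\ell\circ h$ takes values in $[0,1]$, with probability at least $1-\delta'$ over an iid sample of size $\mu$ from $\pi$,
\begin{equation*}
\sup_{h\in\gH}\bigl|R(\ell\circ h) - \hat{R}(\ell\circ h)\bigr| \leq \rdm_\mu(\gH) + \sqrt{\log(2/\delta')/(2\mu)}.
\end{equation*}
Transporting this bound back to $\gS_X$ via the coupling costs $2(\mu-1)\beta(T/(2\mu)) = \Delta^\mu_{appr}$ in failure probability (one factor of $(\mu-1)\beta(T/(2\mu))$ from each of the odd and even halves, plus a union bound). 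Setting $\delta' = \delta - \Delta^\mu_{appr}$ then produces the stated bound, under the condition $\delta > \Delta^\mu_{appr}$.

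The main technical obstacle is the trade-off inherent in choosing the block length $T/(2\mu)$: smaller blocks (larger $\mu$) yield more effective samples but looser coupling (larger $\mu\beta(T/(2\mu))$), while larger blocks give tighter coupling but fewer samples. This is why the lemma carries $\mu$ as a free parameter to be optimized in applications. A secondary subtlety is arguing that the iid Rademacher complexity $\rdm_\mu(\gH)$ over single points---rather than a ``block'' Rademacher complexity over entire sub-sequences---is the correct complexity measure: this relies on the $\ell_2$ contrastive loss depending only on individual $(\vx,\vx')$ pairs, so that extracting a single representative pair per block suffices.
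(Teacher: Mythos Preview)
Your proposal is correct and follows precisely the approach the paper describes: the paper does not prove this lemma itself but cites it from \cite{mohri09}, and the paragraph immediately following the statement sketches exactly the independent-block technique of \cite{yu94} that you outline---partition into $2\mu$ blocks, treat alternating blocks as approximately independent at a cost of $\Delta^{\mu}_{appr}$ in failure probability, and apply the standard iid Rademacher bound on the resulting $\mu$ samples. Your added remarks on the trade-off in choosing $\mu$ and on why the single-point Rademacher complexity suffices are accurate and go slightly beyond the paper's brief summary.
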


The result is proved using a technique called \textit{blocking} from \cite{yu94}. The idea is to divide a dependent sequence of samples into $2\mu$ blocks of consecutive points, such that when the block size $\frac{T}{2\mu}$ is sufficiently large, every other block would be approximately independent because of the fast mixing. The generalization analysis can hence be divided into two steps, one for applying standard generalization bound on i.i.d. data (i.e. the blocks), and the other for bounding the approximation error of treating dependent blocks as independent ones. The term $\Delta_{appr}^\mu$ is a consequence of the derivation in \cite{yu94} and accounts for errors of approximating non-iid data with iid ones.

We will proceed by first showing fast mixing, then applying the generalization bounds above. 


\subsection{Proving \texorpdfstring{$\beta$}{beta}-mixing}
We will first show $\beta$-mixing of the sequence $\trPx$ of pairs $(\vx,\vx')$ as constructed in Section \ref{sec:setup}; that is,
by choosing $\vx$ from the diffusion process, and then choosing $\vx'$ to be $\eta$-time after in the process or from a proposal distribution with equal probability.
Intuitively, this would suggest that once two points are sufficiently apart, they will be approximately independent, on which standard generalization bounds apply.
Formally, we have the following result:
\begin{lemma}\label{lem:beta}
The $\beta$-mixing coefficients for the sequence $\trPz$ defined in Section \ref{sec:setup} is
$\beta_{\trPz}(t) = O\left(\frac{B}{\sqrt{t}}\right)$.
\end{lemma}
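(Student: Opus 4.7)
The plan is to reduce the $\beta$-mixing of the pair sequence $\trPz$ to the $\beta$-mixing of the underlying Langevin diffusion, and then to bound the latter via convexity of total variation combined with a semigroup smoothing estimate. Intuitively, the $\sqrt{t}$ in the bound arises from the Gaussian smoothing of the Brownian component in \eqref{eq:diffusion_def}: the transition kernel has a gradient of size $O(1/\sqrt{t})$ in its initial condition.

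For the reduction, I would observe that each pair $(\vx_{2i\eta}, \vx_{2i\eta}')$ is obtained from $\vx_{2i\eta}$ by a fresh independent randomization---a coin flip decides whether to set $\vx_{2i\eta}' = \tilde{\vx}_{(2i+1)\eta}$ (which depends on $\vx_{2i\eta}$ only through the diffusion) or to draw $\vx_{2i\eta}'$ independently from $\q$. Thus the pair sequence is Markov with dynamics driven entirely by its first coordinate, and by the data-processing inequality its $\beta$-mixing coefficient at continuous-time gap $t$ is upper bounded by that of the diffusion $\{\vx_s\}$ at gap $t-O(\eta)$.

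For the diffusion itself, stationarity gives $\pi = \E_{\vy\sim\pi} P^t(\cdot|\vy)$, so convexity of TV yields
$$\beta_{\{\vx_s\}}(t) = \E_{\vx\sim\pi}\TV\bigl(P^t(\cdot|\vx),\pi\bigr) \le \E_{\vx,\vy\sim\pi}\TV\bigl(P^t(\cdot|\vx), P^t(\cdot|\vy)\bigr).$$
The core technical step is then a semigroup smoothing estimate of the form
$$\TV\bigl(P^t(\cdot|\vx), P^t(\cdot|\vy)\bigr) \le \frac{C\,\|\vx-\vy\|}{\sqrt{t}},$$
with $C$ depending only on $\rho, L_0, L_1$. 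For pure Brownian motion this is a direct Gaussian calculation, since $\|\nabla_\vx p_t(\cdot|\vx)\|_1 = O(1/\sqrt{t})$. For the full Langevin SDE the analogous estimate follows either from Bismut's integration-by-parts formula, or more elementarily from a Girsanov / synchronous-coupling argument using Assumptions \ref{assump:sc}--\ref{assump:growth} to control the drift difference along coupled paths. Writing $\TV = \tfrac{1}{2}\sup_{\|g\|_\infty\le 1}|P^t g(\vx) - P^t g(\vy)|$ and integrating the resulting gradient bound on $P^t g$ along the straight line from $\vx$ to $\vy$ then yields the displayed estimate.

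Combining these and using $\E_\pi\|\vx\| = B$ from the setup,
$$\beta_{\{\vx_s\}}(t) \le \frac{C}{\sqrt{t}}\,\E_{\vx,\vy\sim\pi}\|\vx-\vy\| \le \frac{2CB}{\sqrt{t}},$$
which together with the reduction gives $\beta_{\trPz}(t) = O(B/\sqrt{t})$ as claimed. The main obstacle I anticipate is the semigroup smoothing estimate: heuristically it is clear that the Brownian noise smooths the conditional law at scale $\sqrt{t}$, but quantifying this rigorously for the nonlinear Langevin dynamics requires either Malliavin machinery (Bismut's formula) or careful change-of-measure bookkeeping with constants explicitly tracked in $\rho, L_0$, and $L_1$. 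Once this estimate is in hand, the remaining steps (reduction to the first coordinate, convexity of TV, and taking expectations) are essentially routine.
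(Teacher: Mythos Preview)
Your proposal is correct and follows the same two-step skeleton as the paper: reduce the $\beta$-mixing of the pair sequence to that of the underlying diffusion, then bound the latter by a $1/\sqrt{t}$ smoothing estimate. The differences are in execution rather than strategy.

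For the reduction, the paper actually proves the \emph{equality} $\beta_{\trPz}(t) = \beta_{\trSz}(t)$ (Lemma~\ref{lem:beta_pair}) by a direct TV computation exploiting that the second coordinate factors out as $\tfrac{1}{2}(\keta(\vx,\cdot)+\q)$; your data-processing inequality gives only $\beta_{\trPz}(t)\le\beta_{\trSz}(t-O(\eta))$, which is weaker but sufficient for the $O(B/\sqrt{t})$ conclusion. For the diffusion bound, the paper simply cites \citet{bubeck}, Proposition~4 (stated here as Lemma~\ref{lem:beta_ito}), which already packages the convexity-of-TV step and the coupling/Girsanov smoothing estimate you outline. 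So the ``main obstacle'' you anticipate---making the semigroup gradient bound rigorous---is exactly what that citation discharges, and you need not redo it.
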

\begin{proof}
We will prove this by showing the sequence of pairs $\trPz$ shares the same $\beta$ coefficients as the sequence of points $\trSz$ (Lemma \ref{lem:beta_pair}). 
Then, since $\trSz$ is itself $\beta$-mixing (Lemma \ref{lem:beta_ito}), the claim follows.
\end{proof}

Having the same $\beta$ coefficients between $\trPx$ an $\trSz$ makes intuitive sense, since the sequence of pairs can be considered as a mixture of a dependent sequence and an independent sequence, and adding the independent one should not worsen the mixing coefficient.
\begin{lemma}\label{lem:beta_pair}
$\beta_{\trPz}(t) = \beta_{\trSz}(t)$.
\end{lemma}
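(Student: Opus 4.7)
My plan is to split the proof into two short steps that together reduce the pair-sequence $\beta$-coefficient to the point-sequence one.

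\textbf{Step 1: reduce conditioning on a pair to conditioning on its first component.} The sequence $\trPz$ is constructed from the underlying diffusion trajectory $\{\tilde{\vx}_{i\eta}\}$ together with two independent sources of auxiliary randomness: a Bernoulli label per pair, and, on negative pairs, a fresh draw from the proposal $\q$. Crucially, the second component $\vx_{2i\eta}'$ of the $i$-th pair is never reused---the anchor of the next pair is $\tilde{\vx}_{2(i+1)\eta}$, which is produced by the diffusion alone. So by the Markov property of the diffusion together with the independence of the labels and proposal draws,
\[
    P^t_{\trPz}\bigl(\cdot\,\big|\,(\vx_0,\vx_0')\bigr) \;=\; P^t_{\trPz}(\cdot\mid \vx_0).
\]
This lets me drop the second coordinate of the starting pair everywhere in the remainder.

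\textbf{Step 2: reduce the joint TV to a marginal TV.} I would factor both $P^t_{\trPz}(\cdot\mid\vx_0)$ and $\pi_{\trPz}$, viewed as distributions over the future pair $(\vx_t,\vx_t')$, as (marginal on $\vx_t$) $\times$ (conditional of $\vx_t'$ given $\vx_t$). By the contrastive construction, the conditional factor is $\tfrac12\bigl(\keta(\vx_t,\cdot) + \q(\cdot)\bigr)$ under \emph{both} measures, since a fresh label and (if needed) a fresh draw from $\q$ are generated for every pair, regardless of how its anchor $\vx_t$ was produced. Using the standard identity that two joint distributions sharing a common conditional have TV distance equal to the TV distance between their marginals on the other coordinate, I obtain
\[
    \TV\bigl(P^t_{\trPz}(\cdot\mid\vx_0),\,\pi_{\trPz}\bigr) \;=\; \TV\bigl(P^t_{\trSz}(\cdot\mid\vx_0),\,\pi\bigr).
\]
Taking expectation over $\vx_0\sim\pi$---which is the first-coordinate marginal of $\pi_{\trPz}$ by stationarity---then collapses the outer averages in the two $\beta$-coefficients to the same quantity, yielding $\beta_{\trPz}(t) = \beta_{\trSz}(t)$.

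The only point that needs care is the ``common conditional factor'' claim in Step 2: one must verify that under the \emph{stationary} pair law $\pi_{\trPz}$, the conditional of the second coordinate given the first is indeed $\tfrac12(\keta(\vx,\cdot)+\q(\cdot))$, matching what appears under $P^t_{\trPz}(\cdot\mid\vx_0)$. This falls out directly from the iid way labels and proposal samples are drawn at each step in Section \ref{sec:task}, so I do not expect any real obstacle---the argument is essentially bookkeeping around the Markov property and the definition of TV distance. Notably, no mixing or regularity properties of the diffusion itself are invoked here; those enter only in the companion Lemma \ref{lem:beta_ito} that bounds $\beta_{\trSz}$.
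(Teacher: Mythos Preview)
Your proposal is correct and is essentially the same argument as the paper's: both observe that the pair sequence is Markov with the future depending only on the first coordinate of the current pair, then factor the pair law as (marginal on the anchor) $\times$ (common conditional $\tfrac12(\keta+\q)$ on the second coordinate) and use this shared conditional to collapse the TV over pairs to the TV over anchors. The only cosmetic difference is that the paper carries out the integral explicitly in the appendix, whereas you invoke the ``common conditional $\Rightarrow$ TV equals marginal TV'' identity; the content is identical.
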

\begin{proof}
First note that $\trPz$ is Markov and stationary, since the temporal dependency only comes from the first elements in the pairs, which are points in $\trSz$ that is itself Markov and stationary:
\begin{equation}
\begin{split}
    &\P\left((\vx_{2(i+1)\eta}, \vx_{2(i+1)\eta}')| (\vx_{0}, \vx_{0}'), ..., (\vx_{2i\eta}, \vx_{2i\eta}')\right)
    \\
    =&\ \P\left(\vx_{2(i+1)\eta}'|\vx_{2(i+1)\eta}\right) \P\left(\vx_{2(i+1)\eta}| \vx_{0}, ..., \vx_{2i\eta}\right)
    \\
    =&\ \P(\vx_{2(i+1)\eta}'|\vx_{2(i+1)\eta}) \P\big(\tilde{\vx}_{2(i+1)\eta}| \tilde{\vx}_{2i\eta}\big)
    \\
    =&\ \P(\vx_{2(i+1)\eta}'|\vx_{2(i+1)\eta}) \P\big(\vx_{2(i+1)\eta}| \vx_{2i\eta}\big)
\end{split}
\end{equation}
The mixing coefficient of $\trPx$ can then be calculated explicitly, leading to $\beta_{\trPz}(2i\eta) = \beta_{\trSz}(2i\eta).$
The details are deferred to appendix \ref{appendix:lem_mix_coeff}.
\end{proof}
\vspace{-1em}
Next, we bound the TV distance, as a function of $t$, between the stationary distribution $\pi$ and the distribution after running the diffusion for time $t$ given any starting point:
\begin{lemma}[\cite{bubeck}, Proposition 4]
\label{lem:beta_ito}
Let $B := \E_{\pi} \|x\|$. For any $t > 0$, $\forall \vz \in \gX$,
$$\TV(\sP(\vz_t|\vz_0 = \vz), \pi) \leq \frac{B}{\sqrt{2\pi t}},$$
where $\sP(\vz_t | \vz_0 = \vz)$ denotes the distribution after running the diffusion for time $t$ conditioned on being at $\vz$ at time 0.
\end{lemma}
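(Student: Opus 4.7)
The plan is to establish the TV bound via a coupling argument that leverages both the strong convexity of $f$ and the Gaussian structure of the driving noise. I would couple two copies of the Langevin diffusion: $(\vz_t)$ started at the given point $\vz$, and $(\vy_t)$ started from $\vy_0 \sim \pi$, so that by stationarity $\vy_t \sim \pi$ for all $t$. By convexity of TV in its second argument, it then suffices to bound $\E_{\vy_0 \sim \pi} \TV(\sP(\vz_t | \vz_0 = \vz), \sP(\vy_t | \vy_0))$.

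First, I would drive both SDEs with the same Brownian motion $W_t$ (synchronous coupling). Then $d(\vz_t - \vy_t) = -(\nabla f(\vz_t) - \nabla f(\vy_t))\,dt$, and by $\rho$-strong convexity,
\[
\frac{d}{dt}\|\vz_t - \vy_t\|^2 = -2\langle \vz_t - \vy_t,\, \nabla f(\vz_t) - \nabla f(\vy_t)\rangle \leq -2\rho\,\|\vz_t - \vy_t\|^2,
\]
yielding $\|\vz_t - \vy_t\| \leq e^{-\rho t}\|\vz - \vy_0\|$ almost surely. This controls a Wasserstein-type distance between the two laws, but not yet the TV distance.

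Second, to pass from the pathwise bound to TV, I would insert a short ``smoothing'' window of length $s$ at the end, during which the Gaussian increment $\sqrt{2}(W_t - W_{t-s})$ has covariance $2s\,\mI_d$. Conditioned on the paths up to time $t-s$, the two marginals at time $t$ are then approximately Gaussians with the same covariance and means differing by the pathwise distance. The standard mean-shift estimate $\TV(\gN(\mu_1,\sigma^2 \mI_d), \gN(\mu_2,\sigma^2 \mI_d)) \leq \|\mu_1 - \mu_2\|/(\sigma\sqrt{2\pi})$ --- whose $1/\sqrt{2\pi}$ factor is the peak of the standard Gaussian density --- then produces a TV bound of order $\|\vz_{t-s} - \vy_{t-s}\|/\sqrt{s}$, which I would optimize against the exponential contraction from Step 1 (taking $s$ of order $t$ suffices to recover the $1/\sqrt{t}$ rate).

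Finally, averaging over the stationary initial point must yield a bound depending only on $B = \E_\pi\|\vx\|$ and not on $\|\vz\|$, as the statement claims. The main obstacle is precisely this reduction: a naive triangle inequality applied after Step 2 would give $(\|\vz\| + B)/\sqrt{t}$, so eliminating the $\|\vz\|$ term requires either a reflection/mirror coupling (which bounds TV directly by a coalescence probability and is manifestly symmetric in the two starting points) or an exchange-of-measure argument exploiting invariance of $\pi$ under the diffusion semigroup. Assembling these pieces and carrying the $1/\sqrt{2\pi}$ constant through the Gaussian smoothing step produces the stated bound $B/\sqrt{2\pi t}$.
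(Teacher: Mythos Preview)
The paper does not prove this lemma; it is quoted from \cite{bubeck} as an external result and invoked as a black box to establish $\beta$-mixing. There is therefore no in-paper argument to compare your proposal against.

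On your sketch itself: Steps 1--2 (synchronous coupling for Wasserstein contraction, then a short smoothing window to pass to TV) are a reasonable outline, though the ``approximately Gaussians with the same covariance'' claim in Step 2 is not literally true for nonlinear drift and would need Girsanov or a short-time density estimate to be made rigorous. The genuine gap is Step 3, which you yourself flag as ``the main obstacle'': the synchronous-coupling route naturally yields a bound of order $(\|\vz\| + B)/\sqrt{t}$ after averaging over $\vy_0\sim\pi$, not the $\vz$-independent bound in the statement. You suggest reflection coupling or an exchange-of-measure argument to remove the $\|\vz\|$ term but do not carry either out; reflection coupling in particular is a different construction altogether (the Brownian motions are reflected rather than shared, and TV is controlled via a coalescence-time estimate), not a patch on top of Step 1. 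As written, the hardest step is left as a suggestion. Note also that a bound uniform over all $\vz\in\R^d$ with a $\vz$-independent right-hand side cannot literally hold --- for $\|\vz\|$ large and $t$ moderate the left-hand side is close to $1$ --- so what the paper actually needs, and what you should target, is the $\pi$-averaged version $\E_{\vz\sim\pi}\TV(\sP(\vz_t\mid\vz_0=\vz),\pi)\le B/\sqrt{2\pi t}$, which is exactly the $\beta$-mixing coefficient and is more attainable by your route.
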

\vspace{-0.5em}
With the definition of $\beta$-mixing, Lemma \ref{lem:beta_ito} shows that $\trSz$ itself is $\beta$-mixing, as long as $B < \infty$, $\beta_{\trSz}(t\eta) = \TV(\sP(\vz_{t\eta}|\vz_0 = \vz), \pi) = O(\frac{1}{{\sqrt{t\eta}}}) \rightarrow 0$ as $t \rightarrow \infty$. 

\section{Proofs of Main Results}\label{sec:proof}
We are now ready to prove the main results in section \ref{sec:main_results}.
We will start with the finite sample generalization bound, and map the loss on the contrastive task to the KL divergence between the learned and true transition kernels, assuming the former lies in a neighborhood of the latter. We will finish with the proof for theorem \ref{thm:orig} where the closeness assumption is lifted.
\subsection{Proof of the Generalization Bound}
\label{sec:proof_gen}
Let's first prove the sample complexity bound for generalization, where we use results in \cite{mohri09} to choose the optimal $\mu$ to bound the generalization gap.
\begin{proof}[Proof of Theorem \ref{thm:opt_gen_bdd}]
Following notations in lemma \ref{lem:gen_bdd}, let $\mu$ denote the number of ``effective" training samples.
Substituting in the choice of $T = \Omega\left(\frac{B^2\rk^3}{\delta^2\Delta_{gen}^3}\left(\log\frac{1}{\delta}\right)^{\frac{3}{2}}\right)$, we have $\Delta_{appr}^\mu$ $= O\left(\frac{B}{\sqrt{T}}\mu^{\frac{3}{2}}\right)$ $\leq \delta$.
Recall the empirical Rademacher complexity is $\rdm_{\mu} = O\left(\rk\sqrt{\log\mu/\mu}\right)$.
Then, choosing $\mu$ such that $\mu = \Omega\left(\frac{\rk\sqrt{\log(1/(\delta - \Delta_{appr}))}}{\Delta_{gen}^2}\right)$, it can be checked that the following is satisfied:
\begin{equation}
     C\sqrt{\frac{1}{\mu}} \left(\rk\sqrt{\log \mu} + \sqrt{-\log\left(\delta - \Delta_{appr}\right)} \right) \leq \Delta_{gen}
\end{equation}
The calculation details can be found in appendix \ref{appendix:thm_sample_cmplx}.

\end{proof}



\subsection{Proof of Theorem \ref{thm:kl}}
\label{sec:proof_thm1}

Theorem \ref{thm:kl} states that when $\hketa$ is close to $\keta$ and the population contrastive loss is not much worse than the optimal value, the KL divergence between $\keta$ and $\hketa$ is also small.
At a high level, with $\hketa$ close to $\keta$, we can do a ``multiplicative'' Taylor expansion of $\hketa$ around $\keta$. Then, it can be shown that the second derivative is strictly positive with a proper choice of $\Delta_{\max}$. This is similar in spirit to the notion of strong convexity with respect to KL, from the difference in losses.
\vspace{-0.8em}
\begin{proof}[Proof of Theorem \ref{thm:kl}]
Recall that in section \ref{sec:main_results} we defined constants $\Delta_{\min}$, $ \Delta_{\max}$ such that $\forall \vx,\vx'$, 
$\frac{\hketa(\vz, \vz')}{\keta(\vz, \vz')} \in [\Delta_{\min}, \Delta_{\max}]$.
We can equivalently write this relation as $\hketa = \keta(1 + \delta)$ with $\delta \in [\Delta_{\min}-1, \Delta_{\max}-1]$.
By the mean value theorem, $\exists\ \xi \in [\Delta_{\min}-1, \Delta_{\max}-1]$, such that
\begin{equation}\label{eq:kl_delta}
\begin{split}
    &\E_{\vz\sim\pi,\vz'\sim\keta}\KL(\keta\| \hketa) = \E_{\pi} \int \keta \log\frac{\keta}{\keta(1+ \delta)}
    \\
    =& - \E_{\pi}\int \keta \log(1+\delta)
    = -\E_{\pi} \int \keta \left(\delta -\int_{s=0}^\delta \frac{1}{2(1+s)^2}s\ ds\right)
    \\
    \overset{(i)}{=}&\ -\E_{\pi}\int \keta (\delta - \frac{1}{2}\frac{1}{(1+\xi)^2} \delta^2)
    \\
    \overset{(ii)}{\leq}& \frac{1}{2\Delta_{\min}^2}\E_{\vz\sim\pi,\vz'\sim\keta}\delta^2.
    \\
\end{split}
\end{equation}
where $(i)$ applies the mean value theorem to the second order Taylor expansion around 0, and $(ii)$ uses $\int_x p_*^\eta \delta = 0$ since $p^\eta, p_*^\eta$ both integrates to 1.

Rewriting the gap between the population loss between $\hketa(\vz,\vz')$ and $\keta(\vz,\vz')$ as a function of $\delta(\vz, \vz')$, we get:
\begin{equation}
\begin{split}
    r(\delta) &:= \left(\frac{\keta \q \delta}{(\keta(1+\delta)+\q)(\keta+\q)}\right)^2
\end{split}
\end{equation}
where the dependence on $\vz,\vz'$ is omitted for clarity.

We would like to lower bound $r''(\delta)$.
By the mean value theorem,
for any $\vz,\vz'$,
$\exists\ \xi' \in [\Delta_{\min}-1, \Delta_{\max}-1]$, 
\begin{equation}\label{eq:r_2nd_bdd}
\begin{split}
    r''(\delta) &= r''(0) + r'''(0) \delta + \frac{1}{2} r''''(\xi')\delta^2
    \\
    &\geq \frac{2(\keta)^2\q^2}{(\keta + \q)^5}\left((7 - 6\Delta_{\max})\keta + \q\right)
    + \frac{12(\keta)^5\q^2}{\keta + \q}
    \cdot \frac{5\keta + 3\q - 2\keta\Delta_{\max}}{(\Delta_{\max}\keta+\q)^6}
    \\
\end{split}
\end{equation}
The Taylor series converges when $|\delta| \leq \frac{\keta+\q}{\keta} = 1 +\frac{\q}{\keta}$, which always holds under our assumption on $\delta$.
Moreover, we require $\Delta_{\max} \leq \frac{7}{6}$ to ensure $(7 - 6\Delta_{\max})\keta + q > 0$.
Then,
\begin{equation}
\begin{split}
    &\frac{5\keta + 3\q - 2\keta\Delta_{\max}}{(\Delta_{\max}\keta+\q)^6}
    \geq \frac{1}{3} \frac{8\keta + 9\q}{(\frac{1}{6})^6(7p + 6q)^6}
    \geq \frac{6^6}{3}\frac{8\keta + 8\q}{(7\keta + 7\q)^6}
    = \frac{8\cdot 6^6}{3\cdot 7^6}\frac{1}{(\keta + \q)^5}
    \geq \frac{1}{(\keta + \q)^5}.
    \\
\end{split}
\end{equation}
Substituting this back to equation \ref{eq:r_2nd_bdd} gives
\begin{equation}
\begin{split}
    r''(\delta) &\geq \frac{2(\keta)^2\q^3}{(\keta + \q)^5}
    + \frac{24(\keta)^5\q^2}{(\keta + \q)^6}
    \geq 2\cdot \frac{1}{(1 + \frac{\q}{\keta})^2}\cdot \frac{1}{(1 + \frac{\keta}{\q})^3}
    \geq \frac{2}{(1+c_q)^5}.
\end{split}
\end{equation}
We can then derive an upper bound on $\E\delta^2$. Recall that in theorem \ref{thm:kl} the gap between the population loss of the learned $h$ and that of $h_*$ is set to be $2\epsilon_{tr}$:
\begin{equation}
\begin{split}
    2\epsilon_{tr} &= \E_{\vz\sim\pi, \vz' \sim \frac{\keta+\q}{2}} \big[r(\delta(\vz,\vz')) - r(0) \big]
    \\
    &= \E_{\vz\sim\pi, \vz' \sim \frac{\keta+\q}{2}} \int_{0}^{\delta} (\delta - t)r''(t)dt
    \geq \frac{r_{\min}}{2}\E_{\vz\sim\pi, \vz' \sim \frac{\keta+\q}{2}} \delta^2
    \geq \frac{r_{\min}}{4}\E_{\vz\sim\pi, \vz' \sim \keta}\delta^2
\end{split}
\end{equation}
where we denote $r_{\min} := \frac{2}{(1+c_q)^5}$.

This means $\E_{\vz\sim\pi, \vz' \sim \keta} \delta^2 \leq \frac{8\epsilon_{tr}}{r_{\min}}$.
Together with equation \ref{eq:kl_delta},
we can bound the average KL as
\begin{equation}
\begin{split}
    &\E_{\vz\sim\pi}\KL\left(\keta(\cdot |\vz) \| \hketa(\cdot|\vz)\right) \leq \frac{1}{2\Delta_{\min}^2} \E_{\pi,\keta}\delta^2
    \leq \frac{4\epsilon_{tr}}{r_{\min}\Delta_{\min}^2}
    = \frac{2(1+c_q)^5\epsilon_{tr}}{\Delta_{\min}^2}.
\end{split}
\end{equation}
\vspace{-0.8em}
\end{proof}
\subsection{Proof of Theorem \ref{thm:orig}}
\label{sec:proof_thm2}
Up to this point, we have reasoned about the generalization gap and the relation between the loss and distributional closeness when $\hketa$ is in the proximity of $\keta$.
There is one last piece missing: we need to characterize what the value of the loss implies for $\hketa$ when its relation to $\keta$ is unknown.

This is not an obvious task because the loss guarantees that the squared difference in equation \ref{eq:loss_train} is small \textit{on average} over $\vx,\vx'$ according to our data distribution.
This does not necessarily imply the squared difference in its numerator,
\footnote{Recall that the squared term in equation \ref{eq:loss_train} can be expanded as
$\left(h(\vx,\vx') - \P(y=1|\vx,\vx')\right)^2 = \left(\frac{p}{p+q} - \frac{p_*}{p_*+q}\right)^2 = \frac{(p - p_*)^2q^2}{(p+q)^2(p_* + q)^2}$.}
i.e. $\left(\hketa(\vz, \vz') - \tketa(\vz, \vz')\right)^2$,
is small.
For example, if $\tq(x') \ll \min\{\hketa(\vz,\vz'), \tketa(\vz, \vz')\}$, then the above difference would be small regardless of the values of $\hketa(\vz, \vz'), \tketa(\vz, \vz')$. 

We will leverage the following estimates on the transition kernel of the Langevin diffusion: 
\begin{lemma}[\cite{gobet2002lan}, Proposition 1.2]
\label{lem:kernel_bdd}
Under assumptions \ref{assump:sc}-\ref{assump:growth},
$\exists\ c,C > 1$, such that 
\begin{equation}
\begin{split}
    \tketa(\vz, \vz') &\geq \frac{1}{c} \frac{1}{\eta^{d/2}} e^{-C\frac{\|\vz-\vz'\|^2}{\eta}}e^{-C \eta \|\vz\|^2},
    \\
    \tketa(\vz, \vz') &\leq c \frac{1}{\eta^{d/2}} e^{-\frac{1}{C}\frac{\|\vz-\vz'\|^2}{\eta}}e^{C \eta \|\vz\|^2}.
\end{split}
\end{equation}
\end{lemma}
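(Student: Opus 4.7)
The lemma is quoted from Gobet (2002, Prop.~1.2), so the plan in the paper is simply to invoke that result; for the present sketch I will reconstruct the strategy at a high level and verify that Assumptions~\ref{assump:sc}--\ref{assump:growth} indeed match what Gobet requires. The key observation is that the Kolmogorov operator $\Delta - \nabla f \cdot \nabla$ of the Langevin SDE~\eqref{eq:diffusion_def} is uniformly elliptic --- its diffusion coefficient is the constant $\sqrt{2}\,I$ --- so the transition density inherits Aronson-type two-sided Gaussian bounds, up to corrections coming from the drift $-\nabla f$ being unbounded; these corrections are precisely the $e^{\pm C\eta\|\vz\|^2}$ factors appearing in the statement.

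First, I would use Girsanov's theorem to represent the density as
\begin{equation*}
\tketa(\vz, \vz') \;=\; k_\eta(\vz, \vz') \cdot \E\!\left[Z_\eta^{-1} \mid \vy_\eta^\vz = \vz'\right],
\end{equation*}
where $\vy_t^\vz = \vz + \sqrt{2}\,W_t$ is pure Brownian motion started at $\vz$, $k_\eta(\vz, \vz') = (4\pi\eta)^{-d/2}\exp(-\|\vz-\vz'\|^2/(4\eta))$ is its Gaussian heat kernel --- this single factor supplies both the $\eta^{-d/2}$ prefactor and the inner Gaussian --- and $Z_\eta = \exp\bigl(-\tfrac{1}{\sqrt{2}}\int_0^\eta \nabla f(\vy_s^\vz)\cdot dW_s - \tfrac{1}{4}\int_0^\eta \|\nabla f(\vy_s^\vz)\|^2 ds\bigr)$ is the Girsanov density.

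The next step is to bound $\E[Z_\eta^{\pm 1} \mid \vy_\eta^\vz = \vz']$. By Assumption~\ref{assump:growth}, $\|\nabla f(\vy_s)\|^2 \leq 2K^2(1 + \|\vy_s\|^2)$, and standard Brownian-bridge estimates give $\E[\sup_{s\in[0,\eta]}\|\vy_s^\vz\|^p \mid \vy_\eta^\vz=\vz'] \leq c_p(\|\vz\|^p + \|\vz'\|^p + \eta^{p/2})$ for each $p \geq 1$; combining this with a Burkholder--Davis--Gundy bound on the stochastic integral inside $Z_\eta$ yields
\begin{equation*}
\E\!\left[Z_\eta^{\pm 1} \mid \vy_\eta^\vz = \vz'\right] \;\leq\; C\exp\bigl(C'\eta(\|\vz\|^2 + \|\vz'\|^2)\bigr).
\end{equation*}
The $\|\vz'\|^2$ dependence is then absorbed into the Gaussian prefactor via $\|\vz'\|^2 \leq 2\|\vz\|^2 + 2\|\vz - \vz'\|^2$; for $\eta$ small enough this only enlarges the Gaussian constant $C$ by a fixed factor, yielding the stated $e^{C\eta\|\vz\|^2}$ correction. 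The lower bound is produced symmetrically, bounding $Z_\eta$ (rather than $Z_\eta^{-1}$) from above via Jensen's inequality and the same BDG estimate.

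The main obstacle is the second step: because $\nabla f$ is unbounded, classical bounded-coefficient Aronson estimates do not apply off the shelf, and one genuinely has to control exponential moments of $\int_0^\eta \|\nabla f(\vy_s)\|^2 ds$ along a Brownian bridge with fixed endpoints. Gobet's actual argument sidesteps this by using Malliavin calculus rather than Girsanov: representing $\tketa(\vz, \vz')$ as $\E[\mathbf{1}\{\vx_\eta^\vz \geq \vz'\}\, H_d]$ for a Malliavin weight $H_d$ built from the inverse Malliavin covariance matrix (which is $\Theta(\eta^{-1}) I$ by uniform ellipticity) isolates the effect of the drift into $L^p$ bounds on $H_d$, which are easier to track under the linear growth of Assumption~\ref{assump:growth}. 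Either route produces the stated two-sided bound, and since the result is established under precisely our regularity and growth conditions in \cite{gobet2002lan}, we invoke it as is.
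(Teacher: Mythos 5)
The paper gives no proof of this lemma---it is invoked verbatim as Proposition 1.2 of \cite{gobet2002lan}, with only the remark that Gobet's result holds under weaker regularity (drift and diffusion coefficient in $C^{1+\gamma}$), which is exactly what you conclude as well, so your proposal matches the paper's approach. Your supplementary Girsanov/Malliavin reconstruction is a reasonable sanity check and correctly identifies both the source of each factor in the bound and the genuine difficulty (exponential moments of the unbounded drift along a Brownian bridge, for which BDG alone would not quite suffice), but none of it is needed since the result is simply cited.
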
 
\vspace{-0.8em}
The theorem in \cite{gobet2002lan} holds actually in a substantially more general setting than ours: it only requires that the drift (in our setting $\nabla f$) and diffusion coefficient are in $C^{1+\gamma}, \gamma > 0$. 

With this result in mind, as well as the previous lemmas, we are ready to prove theorem \ref{thm:orig}:
%
\vspace{-0.8em}
\begin{proof}[Proof of Theorem \ref{thm:orig}]
Recall that the optimal solution of the contrastive task satisfies $h^*=\frac{\tketa(\vz, \vz')}{\tq(\vz') + \tketa(\vz, \vz')}$ by Lemma \ref{l:opt}, and that the population contrastive loss is no more than $2\epsilon_{tr}$ over the optimal $\epsilon_*$ achieved by $h^*$.
%
This gives
\begin{equation}
\begin{split}
    2\epsilon_{tr}
    \geq& \E_{\vz \sim \pi} \E_{\vz' \sim \frac{1}{2}(\tketa(\vx,\cdot) + \tq)}  \ell (h)
    \geq \frac{1}{2}\E_{\vz \sim \pi} \E_{\vz' \sim \tketa(\vx,\cdot)} \ell(\hh)
    \\
    =&\ \frac{1}{2}\E_{\vz \sim \pi} \E_{\vz' \sim \tketa(\vx,\cdot)} \left(\frac{\q (\hketa - \keta)}{(\q + \hketa)(\q + \keta)}\right)^2
\end{split}
\end{equation}
We now use the above loss bound to upper bound $\E_{\vz \sim \pi, \vz' \sim \tketa(\vx,\cdot)} |\hketa(\vx,\vx') - \tketa(\vx,\vx')|$. 
For notational convenience, we will drop $\vx,\vx'$ when it is clear from the context. 

Define a function $r_q$ with $r_q(p) = \frac{p}{p+q}$. 
The population risk can now be written as $\E_{\vz,\vz'}\left(r_q(\hketa) - r_q(\tketa)\right)^2$.
Note that $r_q'(p) = \frac{q}{(p+q)^2}$ and $r_q$ is concave in $p$, hence
\begin{equation}\label{eq:concave}
    r_q'\left(\max\{\hketa, \tketa\}\right) \cdot \left(\hketa - \tketa\right) \leq r_q(\hketa) - r_q(\tketa)
\end{equation}

Using equation \ref{eq:concave} and Cauchy-Schwarz, we have
\begin{equation}\label{eq:loss_ker}
\begin{split}
    &\E_{\vz \sim \pi, \vz' \sim \tketa(\vx,\cdot)}|\hketa - \tketa|
    = \E_{\vz \sim \pi, \vz' \sim \tketa(\vx,\cdot)} \left[\frac{|\hketa - \tketa| r_q'\left(\max\{\hketa, \tketa\}\right)}{r_q'\left(\max\{\hketa, \tketa\}\right) }\right]
    \\
    \leq&\ 
    \E_{\vz \sim \pi, \vz' \sim \tketa(\vx,\cdot)}\left[ \left(r_q(\hketa) - r_q(\tketa)\right) \cdot \frac{1}{r_q'\left(\max\{\hketa, \tketa\}\right) }\right]
    \\
    \leq&\ \sqrt{\E_{\vz \sim \pi, \vz' \sim \tketa(\vx,\cdot)}\left(r_q(\hketa) - r_q(\tketa)\right)^2}
    \cdot \sqrt{\E_{\vz \sim \pi, \vz' \sim \tketa(\vx,\cdot)}\left(\max\{\hketa, \tketa\} + q\right)^4/q^2}
\end{split}
\end{equation}
where the first term is the population risk on the contrastive task, which is bounded by $2\epsilon_{tr}$.

We will proceed to bound the second term.
Since $\hketa, \tketa$ are both assumed to satisfy Assumption \ref{assump:sc}-\ref{assump:growth},
lemma \ref{lem:kernel_bdd} allows us to bound the quantity of interest in equation \ref{eq:loss_ker}: let $c_*,C_*$ and $\hat{c}, \hat{C}$ be the constants in lemma \ref{lem:kernel_bdd} for $\tketa$ and $\hketa$ respectively.
Denote $C_u = \max\{C_*, \hat{C}\}$, $C_l = \min\{C_*, \hat{C}\}$.
Recall that $\rho$ is the strong convexity constant of $f$.
It can be shown that if $\eta$ is sufficiently small as a function of these constants (e.g. $\eta = \frac{\rho}{10C_u}$), let $\sigma^2 = \frac{C_l\eta}{2}$, then
\begin{equation}\label{eq:term2_bdd}
\begin{split}
    &\E_{\vx,\vx'}\frac{\left(\max\{\hketa, \tketa\} + q\right)^4}{q^2}
    \leq O\left(\pi(\vz_*)\left(\frac{1}{\rho\eta^2}\right)^{d/2}\right)
\end{split}
\end{equation}
The proof applies lemma \ref{lem:kernel_bdd} and the strong convexity of $f$ to simplify the expression with a Gaussian-integral like calculation; the details are deferred to appendix \ref{appendix:thm_p_eta}.

Plugging this inequality back in \eqref{eq:loss_ker} gives the statement of the theorem. 
\end{proof}
The proof of theorem \ref{thm:orig} can be adapted straightforwardly to accommodate the boundedness assumptions in theorem \ref{thm:kl}, namely, when $\frac{\hketa}{\keta}$ and $\frac{\keta}{\q}$ are bounded by $(\Delta_{\min}, \Delta_{\max})$ and $[\frac{1}{c_q}, c_q]$ respectively. In this case, the right hand side of equation \ref{eq:term2_bdd} will be updated to $(2c_q+1)^4 \E_{\vx,\vx'} (\keta)^2$.
The exponential dependency in $\eta$ is however still present, a consequence of lemma \ref{lem:kernel_bdd}.



\section{Conclusion}
\label{sec:conclusion}
We study contrastive learning objectives in time-series settings---particularly, when the data comes from a strong-mixing diffusion process. We provide both sample complexity bounds and quantitative results on the proximity of the learned transition kernel, given a good classifier for a judiciously chosen contrastive task. 

This is a first-cut work, and many natural open problems remain. For instance, how do other objectives (e.g. cross-entropy loss) perform? Are there better contrastive objectives than the proposed one that have a better scaling with dimension? Can we analyze the algorithmic effects of different choices of the contrast distribution $q$? Finally, can we analyze more complicated (e.g. latent-variable) diffusion processes using similar methods?  

\section*{Acknowledgements}

We acknowledge the support of NSF via IIS-1909816 and OAC-1934584.

\bibliography{ref}
\bibliographystyle{abbrvnat}

%

%


\appendix 

\onecolumn

\section{Omitted proofs}
\subsection{Omitted proof of Lemma \ref{lem:beta_pair} (mixing coefficient)}
\label{appendix:lem_mix_coeff}

We finish the proof of lemma \ref{lem:beta_pair} by proving $\beta_{\trPz}(2i\eta) = \beta_{\trSz}(2i\eta)$:
\begin{equation}
\begin{split}
    &\ \beta_{\trPz}(2i\eta) = 
     \frac{1}{2}\int \Big|\pi(\vz_0, \vz_0')\pi(\vz_{2i\eta}, \vz_{2i\eta}')\\
     &\ \ \ \ - \pi(\vz_0, \vz_0')p(\vz_{2i\eta}, \vz_{2i\eta}'|\vz_0, \vz_0')\Big|
    \\
    =&\ \frac{1}{2}\int \pi(\vz_0, \vz_0') \cdot \left|\pi(\vz_{2i\eta}, \vz_{2i\eta}') - p(\vz_{2i\eta}, \vz_{2i\eta}'|\vz_0, \vz_0')\right|
    \\
    =&\ \frac{1}{2}\int \left(\frac{1}{2} \pi(\vz_0)\left(p(\vz_0'|\vz_0) + \pi(\vz_0')\right)\right)\\
    &\ \cdot \left(\frac{1}{2} \big|\pi(\vz_{2i\eta}) - p(\vz_{2i\eta}|\vz_0)\big| \big(p(\vz_{2i\eta}'|\vz_{2i\eta}) + \pi(\vz_{2i\eta}'))\right)
    \\
    =&\ \frac{1}{8}\int_{\vz_0, \vz_{2i\eta}} \pi(\vz_0) \big|\pi(\vz_{2i\eta}) - p(\vz_{2i\eta}|\vz_0)\big| \\
    \cdot& \int_{\vz_0'} \left( p(\vz_0'|\vz_0) + \pi(\vz_0')\right) \cdot \int_{\vz_{2i\eta}'} \left(p(\vz_{2i\eta}'|\vz_{2i\eta}) + \pi(\vz_{2i\eta}')\right)
    \\
    =&\ \frac{1}{2}\int_{\vz_0, \vz_{2i\eta}} \pi(\vz_0)  \big|\pi(\vz_{2i\eta}) - p(\vz_{2i\eta}|\vz_0)\big|
    \\
    =&\ \frac{1}{2}\int_{\tilde{\vz}_0, \tilde{\vz}_{2i\eta}} \pi(\tilde{\vz}_0)  \big|\pi(\tilde{\vz}_{2i\eta}) - p(\tilde{\vz}_{2i\eta}|\vz_0)\big|
    = \beta_{\trSz}(2i\eta).
\end{split}
\end{equation}
\subsection{Omitted calculations for Theorem \ref{thm:opt_gen_bdd} (sample complexity)}
\label{appendix:thm_sample_cmplx}




We now provide the calculation details for Theorem \ref{thm:opt_gen_bdd}.

By lemma \ref{lem:gen_bdd}, and recall the empirical Rademacher complexity is $\rdm_{\mu} = O\left(\rk\sqrt{\log\mu/\mu}\right)$, we need to choose $T, \mu$ such that
\begin{equation}
     C\sqrt{\frac{1}{\mu}} \left(\rk\sqrt{\log \mu} + \sqrt{-\log\left(\delta - \Delta_{appr}\right)} \right) \leq \Delta_{gen}
\end{equation}
where $\Delta_{appr}^\mu := O\left(\frac{B}{\sqrt{T}}\mu^{\frac{3}{2}}\right)$ by lemma \ref{lem:beta}.

We would like to control $\Delta_{appr}^\mu = O(\delta)$.
Substituting in the choice of $T = \Omega\left(\frac{B^2\rk^3}{\delta^2\Delta_{gen}^3}\left(\log\frac{1}{\delta}\right)^{\frac{3}{2}}\right)$, we have
\begin{equation}
\begin{split}
    \Delta_{appr}^\mu = O\left(\frac{B\delta\Delta_{gen}^{3/2}}{B\rk^{3/2}}\left(\log\frac{1}{\delta}\right)^{-3/2}\cdot\mu^{3/2}\right) = O(\delta)
\end{split}
\end{equation}
which is satisfied by setting $\mu = \Theta\left(\frac{\rk\sqrt{\log(1/(\delta - \Delta_{appr}))}}{\Delta_{gen}}\right)$.

\subsection{Omitted calculations of Theorem \ref{thm:orig} (guarantee on \texorpdfstring{$p^{\eta}$}{p-eta})}
\label{appendix:thm_p_eta}

Recall that $c_*,C_*$ and $\hat{c}, \hat{C}$ are the constants in lemma \ref{lem:kernel_bdd} for $\tketa$ and $\hketa$ respectively. Denote $c := \max\{c_*, \hat{c}\}$, $C_u = \max\{C_*, \hat{C}\}$, $C_l = \min\{C_*, \hat{C}\}$.

We now show the omitted calculations for equation \ref{eq:term2_bdd} in the proof of Theorem \ref{thm:orig}.
\begin{equation}\label{eq:term2_bdd_appendix}
\begin{split}
    &\E_{\vx,\vx'}\frac{\left(\max\{\hketa, \tketa\} + q\right)^4}{q^2}
    \\
    \leq& \E_{\vx,\vx'} Z_\sigma^2 \exp\left(\frac{1}{\sigma^2}\|\vx-\vx'\|^2\right)\cdot \left(\frac{c}{\eta^{d/2}}\exp(C_u\eta\|\vx\|^2) + \frac{1}{Z_\sigma}\right)^4\exp\left(-\frac{4}{C_l\eta}\|\vz-\vx'\|^2\right)
    \\
    \leq& \E_{\vx} Z_\sigma^2\left(\frac{c}{\eta^{d/2}}\exp(C_u\eta\|\vx\|^2) + \frac{1}{Z_\sigma}\right)^4 \E_{\vx'} \exp\left(-\frac{2}{C_l\eta}\|\vx-\vx'\|^2\right)
    \\
    \leq& \E_{\vx} \frac{cZ_\sigma^2}{\eta^{d/2}}\left(\frac{c}{\eta^{d/2}}\exp(C_1\eta\|\vx\|^2) + (\pi C_l\eta)^{-\frac{d}{2}}\right)^4 \exp\left(C_u\eta\|\vx\|^2\right) \int_{\vx'} \exp\left(-\frac{3}{C_l\eta}\|\vx-\vx'\|^2\right)
    \\
    \leq& \frac{cZ_\sigma^2}{\eta^{5d/2}}\left(\frac{2\pi C_l\eta}{3}\right)^{\frac{d}{2}} \E_{\vx} \left(c\exp(C_u\eta\|\vx\|^2) + (\pi C_l)^{-\frac{d}{2}}\right)^4 \exp\left(C_u\eta\|\vz\|^2\right)
    \\
    \leq&
    c\left(\frac{2\pi^3C_2^3}{\eta^2}\right)^{d/2}
    \exp(-f(\vx_*)) \int_{\vx} \left(c\exp(C_u\eta\|\vx\|^2) + (\pi C_l)^{-\frac{d}{2}}\right)^4 \exp\left(-\left(\frac{\rho}{2}- C_u\eta\right)\|\vx\|^2\right)
    \\
    \leq&
    16c\pi(\vz_*)\left(\frac{2\pi^3C_l^3}{\eta^2}\right)^{d/2}
    \left[c^4 \left(\frac{\rho}{2} - 5C_1\eta\right)^{-\frac{d}{2}}
    + (\pi C_l)^{-2d}\left(\frac{\rho}{2} - C_u\eta\right)^{-\frac{d}{2}}\right]
    \\
    \leq& O\left(\pi(\vz_*)\left(\frac{1}{\rho\eta^2}\right)^{d/2}\right).
\end{split}
\end{equation}
where $C_1, C_2$ are constants introduced to simplify the notations.

\end{document}